\documentclass[11pt]{article}

\usepackage{acl}

\usepackage{times}
\usepackage{latexsym}
\usepackage[T1]{fontenc}
\usepackage[utf8]{inputenc}
\usepackage{microtype}
\usepackage{inconsolata}
\usepackage{graphicx}
\usepackage{booktabs}
\usepackage{multirow}
\usepackage{amsmath}
\usepackage{amssymb}
\usepackage{amsthm}

\newtheorem{proposition}{Proposition}

\newtheorem{corollary}{Corollary}

\title{Amplifying, Not Learning: \\
  The Price of Out-of-Distribution Generalization in AI-Text Detection}

\author{Alexander Smirnov \\
    University College London \\
    \texttt{alexander.smirnov.25@ucl.ac.uk}}

\newcommand{\dclass}{\mathbf{d}_{\mathrm{class}}}
\newcommand{\dtyp}{\mathbf{d}_{\mathrm{typ}}}
\newcommand{\cls}{\mathbf{h}}
\newcommand{\wh}{\mathbf{w}_h}
\newcommand{\paff}{P(\mathrm{AI})}

\begin{document}
\maketitle

% ===========================================================================
% ABSTRACT  (prose-kit Part C, route-shift-safe; finalize at L3)
% Self-check: no "frozen beats FT" (says "matches") | no "perplexity meter"
% (says "typicality axis") | no "irreducible" | route-shift safe
% ("relocates ... cannot erase") | scoped no-go ("removes ... while
% preserving") | "amplifies/rescales" load-bearing.
% ===========================================================================
\begin{abstract}
AI-text detectors gate decisions in education, hiring, and publishing, yet
they systematically flag the most fluent, formal human writing as
machine-generated: they rate the median formal-native human essay as $99.5\%$
likely AI while clearing genuine high-temperature AI at $10.5\%$. The pattern
is shared by deployed detectors (chatgpt-detector-roberta flags $56\%$ of
formal essays at a $1\%$ false-alarm rate). We show this is not a calibration bug but the signature of one
mechanism: a fine-tuned detector does not learn an AI-versus-human boundary,
it \emph{amplifies} an inherited \emph{typicality} axis (predictability under
a language model) that pre-exists fine-tuning, rescaling it rather than
constructing one.
Decomposing the detector into this inherited reading and a fine-tuned
residual, we find the inherited part carries the bulk of cross-generator
transfer \emph{and} produces the over-flagging of formal humans, while the
residual is generator-specific and does not transfer; indeed a
frozen-representation probe fit on ${\sim}25$ labels per class matches the fully
fine-tuned detector on unseen generators (cross-generator AUROC $0.893$ vs $0.831$). This yields a
no-go: because the axis that transfers is the axis that over-flags, no
training objective, threshold, concept-erasure, or ensemble we test removes
the harm while preserving cross-generator detection, across three architectures,
fine-tuned decoders, and zero-shot perplexity detectors. We give a
closed-form, training-free operator that relocates and diagnoses the bias
(reviving a dead deployed detector, lifting its true-positive rate from $0$ to
$0.904$ at a $1\%$ false-alarm rate) but, consistent with the no-go, is
AUROC-neutral on cross-generator detection: it moves the
bias, it cannot erase it. The mechanism is not specific to English prose, the
over-flag and the inherited-residual decomposition replicate in Chinese and in
code. Detector unfairness is a predictable, structural
property of the detection paradigm, the price of out-of-distribution
generalization.
\end{abstract}

% ===========================================================================
% THROUGH-LINE (restate ~verbatim at each section seam):
%   A fine-tuned detector never learns an "AI" concept -- it READS an inherited
%   typicality signal (predictability under an LM); that signal is at once the
%   only thing that generalizes to unseen generators AND the thing that
%   over-flags fluent/formal humans, so the fairness harm and the detector's
%   robustness are ONE quantity -- the price of OOD generalization -- relocatable
%   but not surgically removable. (One axis, two faces.)
% Three contributions: (1) finding = read-not-learned; (2) tool = closed-form
% -eps operator; (3) scope = paradigm-wide + the no-go everywhere.
% ===========================================================================

\section{Introduction}
\label{sec:intro}

\begin{figure}[t]
  \centering
  \includegraphics[width=\columnwidth]{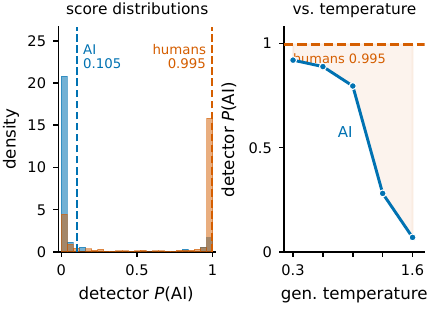}
  \caption{The typicality inversion. \textbf{Left}: a fine-tuned ELECTRA
    detector's score distributions on formal-native human essays (Ghostbuster,
    $n=600$) and genuine high-temperature AI ($n=90$), the human mass concentrates
    near $\paff=0.995$ and the AI mass near $0.105$ (length-controlled).
    \textbf{Right}: the detector's mean score on AI output falls from $0.92$ at low
    generation temperature to $0.07$ at high temperature, staying below the
    formal-human median ($0.995$) throughout. Careful humans are flagged more than
    genuine AI.}
  \label{fig:inversion}
\end{figure}

Automated detectors of machine-generated text now gate consequential
decisions in education, hiring, and publishing. Yet run a state-of-the-art
detector on a careful, formal essay by a native English writer and on a
paragraph drawn from a language model at high temperature, and it more
often calls the \emph{human} essay machine-generated
(Figure~\ref{fig:inversion}): on length-matched samples the median formal
human essay is rated far more likely AI than genuine high-temperature model
output. The detector is not malfunctioning, it is doing
exactly what it was trained to do.
% S1.284 (Ghost median 0.995 vs high-temp AI 0.105; NOT FCE, NOT the mean)

This bites in deployment, on named systems. At its default threshold an
ELECTRA detector \citep{clark2020electra} flags $33.5\%$ of professional
journalism (NYT; \citealp{roy2025comprehensive}) and $70\%$ of
formal-native student essays (Ghostbuster; \citealp{verma2024ghostbuster},
hereafter Ghost). chatgpt-detector-roberta flags $56\%$ of formal essays at
a $1\%$ false-alarm rate.
% S1.267 (NYT 0.335 / Ghost 0.703 @default; hook S1.94/S1.256) / S1.280 (56% @1%)
The harm is sharpest for public detectors (in-house can self-recalibrate),
and at a $5\%$ false-alarm rate the strongest over-flag formal essays at
$71$--$98\%$
(Appendix~\ref{app:fairness}).
% S1.293 (@5% 71-98% = the four strongest only)

The root cause is one sentence: contemporary AI is \emph{more typical} than
professional humans. At matched length, $74$--$98\%$ of 2025-commercial AI
output is lower-perplexity than the median professional human, and
$87$--$98\%$ against professional journalism specifically, which rules out an
informal-register confound, so a detector keyed to typicality \emph{must}
place the most polished humans on the AI side.
% S1.285 (87-98% vs NYT-news = the ELI5/register-confound kill; re-byte-check at L3)

These failures are one mechanism seen from two sides. A fine-tuned detector
does not learn an AI-versus-human boundary; it \emph{amplifies} an inherited
\emph{typicality} axis (predictability under a language model) that pre-exists
fine-tuning, rescaling it rather than constructing one. That single axis is at once the
only signal that transfers to generators the detector never saw and the signal
that over-flags fluent, formal humans. A detector's robustness and its
unfairness are therefore one quantity, the price of out-of-distribution
generalization. We
contribute (i)~a \emph{finding}, that the discriminative axis is read, not
learned; (ii)~a \emph{tool}, a closed-form, training-free operator that revives
dead deployed detectors, diagnoses bias zero-shot on third-party detectors,
and relocates over-flag, AUROC-neutral by construction; and (iii)~a \emph{scope} result, the axis is
paradigm-wide (encoders, decoders to 8B, and zero-shot perplexity detectors),
making the harm a predictable, structural fairness property.

\section{Related Work}
\label{sec:related}
Machine-text detectors are known to misfire on human writing, and unevenly:
audits repeatedly find elevated false-positive rates on non-native English
authors and other populations, and \citet{kuznetsov2024} catalogue detector
robustness and bias among the field's open problems. Prior audits establish
\emph{that} the bias exists; what is missing, and what we supply, is the
mechanism for \emph{why} it arises and whether any detector design removes it.

Two detector families dominate. Supervised detectors fine-tune an encoder on
human/AI labels; zero-shot detectors score text from its statistics under a
reference language model, GLTR \citep{gehrmann2019gltr}, DetectGPT
\citep{mitchell2023detectgpt}, Binoculars \citep{hans2024binoculars}, and
Fast-DetectGPT \citep{bao2024fastdetectgpt} read perplexity or likelihood
curvature with \emph{no trained classifier head}. That a purely
likelihood-based score detects machine text hints that ``AI-ness'' and
typicality are entangled; we ask whether supervised detectors do more than
read the same axis. Largely they do not: a frozen probe matches or beats
zero-shot Binoculars on $18$ of $20$ settings.
% S1.171 (the per-detector head-to-head 0.966 vs 0.159 lives in S4.4, de-duped)

When a representation carries an unwanted direction, the standard remedy is
concept erasure during training: INLP \citep{ravfogel2020inlp}, RLACE
\citep{ravfogel2022rlace}, and LEACE \citep{belrose2023leace}. These pose our
second question: if a bias rides one direction, can it be erased during
training? We contrast this training-time
toolkit with an inference-time, per-text operator (\S\ref{sec:tool}); both
face the no-go. Our impossibility result, in turn, has the shape of a
recognized one: we position it as a detection-specific instance, in the spirit
of the fairness-impossibility theorems \citep{chouldechova2017,kleinberg2017},
with the unusual feature that what plays the role of the protected attribute
\emph{is} the decision statistic. We claim kinship, not their theorem; the
setting differs.

\section{Methods}
\label{sec:methods}
We define the objects the argument needs, introducing notation as it is
used. A \emph{detector} maps a passage $x$ to a score $\paff(x)\in[0,1]$;
unless noted it is an encoder fine-tuned on human/AI labels, with a
final-layer \texttt{[CLS]} representation $\cls(x)$ and a linear decision
head.

\paragraph{The typicality axis, and what ``amplifying'' means.}
On a \emph{frozen}, never-fine-tuned encoder we form the class axis
\begin{equation}
\dclass \;\propto\; \mathrm{centroid}(\mathrm{AI}) \;-\; \mathrm{centroid}(\mathrm{HC3}),
\label{eq:dclass}
\end{equation}
the unit-normalized difference of the mean \texttt{[CLS]} vectors of an AI
population and the human anchor (HC3). We characterize $\dclass$ not as an
``AI-content'' direction but as a \emph{typicality} direction: it orders
text by how predictable it is under a language model, so that fluent,
low-perplexity prose, human or machine, projects toward the AI pole. The
identification is not new in spirit, as zero-shot scores that read
predictability alone separate machine from human text with no trained head
\citep{gehrmann2019gltr,hans2024binoculars,bao2024fastdetectgpt}; what is
new is the causal test on the supervised detector's own representation
(\S\ref{sec:causal}). Typicality here is \emph{internal}, a property of
that representation, for which an external reference-model perplexity is
only a lossy proxy; we therefore never read the detector as a perplexity
meter. Fine-tuning, on this account, \emph{rescales and re-thresholds} the
inherited axis rather than constructing or rotating a new one: it
\emph{amplifies}. Rescaling is not nothing, fine-tuning also adds an
off-axis residual, which we isolate next.

\begin{figure}[t]
  \centering
  \includegraphics[width=\columnwidth]{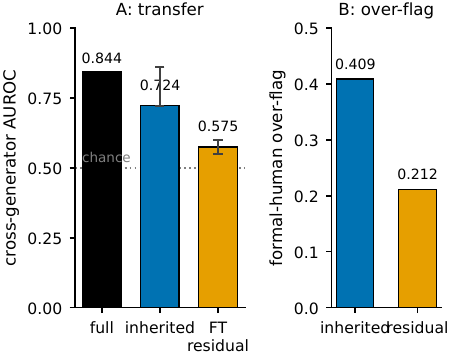}
  \caption{Inherited-vs-residual decomposition. The detector splits into an
    inherited typicality reading and a fine-tuned residual; only the inherited
    part transfers across generators (Panel A: inherited ${\sim}0.72$--$0.86$ vs
    residual ${\sim}0.55$--$0.60$ vs full $0.844$), and it is the part that
    over-flags formal humans (Panel B). One axis carries both.}
  \label{fig:decomp}
\end{figure}

\paragraph{The inherited--residual decomposition.}
We organize the paper around a single construct (Figure~\ref{fig:decomp}):
a fine-tuned detector decomposes into an \emph{inherited} typicality
reading, the component aligned with the pretrained axis, and a fine-tuned
\emph{residual} orthogonal to it. The two behave oppositely. The inherited
component both transfers to generators the detector never saw and over-flags
formal humans; the residual is generator-specific, debiasable, but it does
not survive a change of generator. This one picture unifies the paper's
three claims: read-not-learned (\S\ref{sec:read}), the price, and the no-go
(\S\ref{sec:nogo}).

\paragraph{A closed-form operator and its per-text predictor.}
Our methodological contribution is a training-free knob on the axis paired
with a closed-form forecast of its effect. For the typicality axis $\dtyp$
(estimated within the human anchor as the direction of reference-model
predictability; it aligns with $\dclass$, $\cos\approx0.79$,
Appendix~\ref{app:predictor}) and signed strength $\varepsilon$, writing
$s(x)=\langle \cls(x), \dtyp \rangle$ for the projection onto the axis, we ablate
\begin{equation}
\cls(x) \;\mapsto\; \cls(x) - \varepsilon\, s(x)\, \dtyp,
\label{eq:operator}
\end{equation}
and the per-text logit moves by
\begin{equation}
\Delta\mathrm{logit}(x) \;=\; -\,\varepsilon\, s(x)\, \langle \wh(x), \dtyp \rangle,
\label{eq:predictor}
\end{equation}
where $\wh(x)=\partial\,\mathrm{logit}/\partial\cls$ is the per-text head
Jacobian, computed in one backward pass. This is \emph{exact} for
linear and decoder score heads (where $\wh$ is constant) and first-order
(Taylor) accurate for $\varepsilon\le 0.7$ on fine-tuned encoders (per-text
$R^2=0.993$--$1.000$ on the bias axis).% S1.96.13

\paragraph{Operating points.}
Because the over-flag's \emph{size} depends entirely on where a detector
thresholds, every claim is operating-point-aware. We use four: the default
decision threshold (logit-difference $\ge 0$); a \emph{matched-TPR} point at
which within-domain AI recall is fixed to $0.90$, which removes calibration
as a confound; an \emph{OOD-matched-recall} point fixing recall on held-out
novel-generator AI to $0.90$ (the deployment point the no-go turns on); and
anchored false-alarm points at $1\%$ and $5\%$ FPR. The
same detector can be catastrophic at one point and near-fair at another, so
we never conflate operating points, nor compare absolute magnitudes across
training recipes.

\paragraph{Data and the held-out scope.}
HC3 \citep{guo2023hc3} is the human/AI anchor. Evaluation human populations
span formal journalism (NYT), formal-native student essays
(Ghostbuster), scientific and mixed-domain
text (SemEval-2024 Task~8; \citealp{semeval2024}), and a held-out
cross-generator set (EvoBench; \citealp{evobench2024}); non-native English
(FCE; \citealp{yannakoudakis2011fce}) and dialect
corpora enter in \S\ref{sec:fairness}. A reference LM (GPT-2 / LLaMA-3)
supplies external NLL. Backbones span encoders (ELECTRA, RoBERTa
\citep{liu2019roberta}, DeBERTa \citep{he2021deberta}), with ELECTRA the primary
exposition backbone, and decoders from 124M to 8B parameters. One scope
guard is load-bearing: held-out cross-generator evaluation is EvoBench and
SemEval; \textbf{NYT is in the training distribution}, an in-distribution
positive control rather than transfer, and RAID \citep{raid2024} is in
the training pool, so we cite it only for named-benchmark and adversarial
axes.

\section{Results}
\label{sec:results}
A fine-tuned detector reads an inherited typicality axis that is at once its
only signal for generators it never saw and the source of its over-flagging
of formal humans; the results below establish each half, then show the two
are one quantity.

\subsection{Read, not learned}
\label{sec:read}
The cross-generator signal is inherited, not created by fine-tuning.
A linear probe on the frozen, never-fine-tuned representation already
recovers $76\%$ of the fine-tuned detector's above-chance cross-generator
discrimination ($0.752$ of $0.831$; chance is $0.5$), and fine-tuning the representation barely
raises its within-population separability ($+0.006$).
% S1.294 (clean transfer: frozen 0.752 < FT 0.831; 76% = above-chance ratio; +0.006 = within-pop capacity 0.971->0.977)
This is a shortcut-learning inversion \citep{geirhos2020shortcut}: the
\emph{simple} feature, typicality, is what generalizes, while the
complexity fine-tuning adds is what fails out of distribution. The claim is
not that a frozen probe \emph{beats} fine-tuning: in a clean, same-data
comparison the fully fine-tuned head still leads, on four of five
architectures. What generalizes is read, not learned.

What fine-tuning adds does not generalize. The off-axis residual it
introduces transfers far worse than the inherited reading
($\approx 0.55$--$0.60$ versus $0.72$--$0.86$ cross-generator;
Figure~\ref{fig:decomp}): fine-tuning's own contribution is in-distribution
polish that dies on a new generator.
% S1.271 ; breadth (4 archs x 11 held-out gens) -> appendix
Appendix~\ref{app:nogo} reports this across architectures and held-out
generators.

The inheritance makes detection cheap, in both labels and compute. A
frozen-representation probe fit on about $25$ labeled examples per class,
with no gradient training, matches the fully fine-tuned detector on held-out
generators ($0.893$ versus $0.831$; $50$ labels reach $0.929$), $25$ per class
against full fine-tuning on ${\sim}30{,}000$ samples. Reading typicality directly from a single
110M-parameter encoder, it also matches the zero-shot detectors Binoculars
and Fast-DetectGPT on these generators, using over $100\times$ fewer
parameters than Binoculars, which scores every text with two 7B models.
% S1.294 ; cost: ELECTRA-base 110M vs Binoculars 2x Falcon-7B=14B (S1.171, 27.8GB peak); probe matches Bino/FDG on OOD (few-shot 0.893 > E1 Bino 0.681/FDG 0.705; zero-shot transfer 0.752 also > both)

The carrier of transfer is internal typicality, and only out of
distribution. Matching internal typicality between AI and human text
collapses cross-generator detection from $0.85$ to $0.58$, near chance,
whereas matching an external reference-LM perplexity barely moves it
($0.83$); in-distribution detection on NYT is untouched at $0.99$.
% S1.286 / S1.293
Because the in-distribution detector plainly learns more than typicality,
the unfairness is specifically a property of \emph{out-of-distribution}
generalization, which licenses calling it the price of OOD generalization.

The axis pre-exists fine-tuning paradigm-wide, and on it the decision
is nearly one-dimensional. Across nine decoder language models from 124M to
8B, the raw projection of the un-fine-tuned model separates each model's own AI
output from human text at AUROC $0.93$--$0.99$, matched-length confirmed, none inverted
(Appendix~\ref{app:scope}).
% S1.227
The in-distribution detection rank is $k^{*}{=}1$, rising to $3$--$10$ only
out of distribution (Appendix~\ref{app:nogo}).
% S1.255 / S1.153
This one-dimensionality is the premise the no-go (\S\ref{sec:nogo}) rests on.

\subsection{The axis is typicality, causally}
\label{sec:causal}
Manipulating typicality, not authorship, moves the detector. Raising a
generator's sampling temperature makes its output less typical, and it slips
past the detector: $86$ of $100$ previously caught samples flip to ``human'' (the
length-controlled correlation between reference-LM NLL and detector score is
$\rho=-0.76$).
% S1.284
The effect is typicality rather than mere formality: in a
temperature$\times$register factorial at matched length, typicality dominates
formality by $2.6\times$, $8.6\times$, and $3.4\times$ across detectors
($n=102$).
% S1.232
This earns ``typicality-\emph{dominant}'': formality leaves a small,
inconsistent residual, so the axis is not authorship-blind, but it is
typicality that does the work; the high-temperature AI cleared in
Figure~\ref{fig:inversion} is the same effect.

Editing a human text toward typicality raises its AI-score. The
human-side leg is causal too. A minimal, content-preserving grammar
correction, roughly length-stable, raises a non-native essay's $\paff$ by
$+0.15$ to $+0.21$, about three times the rate for already-fluent native
writing.
% S1.290
A non-neural corruption control and a clean-text null (edits to already-clean
NYT move the score by $\approx 0$) establish that it is the gain in
typicality, not the act of editing, that flags the text. The relationship is
bounded to coherent edits: heavy corruption or very high temperature makes
text atypical in another way and re-flags it as AI.

Absolute-likelihood scores over-flag; curvature escapes, and we can
say why. Scores that read absolute predictability rank formal humans as more
machine-like than informal humans; curvature scores that normalize by the
local density do not. The over-flag AUROC of formal against informal humans
is $0.675$ for reference-LM NLL, $0.907$ for log-rank, and $0.870$ for GLTR
top-$1$ ($0.5$ marks no preference), whereas Fast-DetectGPT and canonical Binoculars sit below it at
$0.328$ and $0.454$ respectively.
% S1.253 (over-flag AUROC = AUROC(NYT-formal vs HC3-informal); >0.5 over-flags, <0.5 escapes; FDG 0.328 / canonical Binoc 0.454 escape)
The reason is that an optimal test computes a likelihood ratio, while an
absolute-perplexity score keeps only a stand-in for its numerator and drops
the human-density denominator:
\begin{equation}
\Lambda(x) = \frac{p(x \mid \text{AI})}{p(x \mid \text{human})}
\;\longrightarrow\;
\tilde{\Lambda}(x) \propto p(x \mid \text{ref}),
\label{eq:degenlr}
\end{equation}
so it ranks text by typicality-under-the-reference-model rather than by
evidence of machine authorship, and fluent, formal human writing is exactly
the high-typicality text that such a degenerate test misreads. This is not merely an
interpretation: grading the score from the absolute likelihood back to the
density-normalized form, which restores the dropped denominator, drives the
formal-human over-flag monotonically to zero (Appendix~\ref{app:degenlr}).
% L1 (agent2_RESULTS): FDG interpolation Spearman 1.000; Ghost gap -0.037->+0.041; literal route confirms

That degenerate test reads an external reference model; the fine-tuned
detector instead reads typicality internally. Its over-flag tracks the
detector's \emph{internal} typicality almost exactly (the length-matched
projection-to-score correlation is $\rho=-0.94$), whereas an external GPT-2 NLL collapses to
non-significance once length is matched.
% S1.274
We therefore call the axis typicality but treat any external perplexity as a
lossy proxy, never a literal ``perplexity meter.''

The over-flag is register, not training-set membership. Held-out HC3
humans are over-flagged at $0.049$, indistinguishable from the
in-training rate of $0.045$, and even scripted dialogue, unseen but highly
predictable, is over-flagged at $0.723$.
% S1.277
The axis reads predictability, not familiarity.

\subsection{The price and the no-go}
\label{sec:nogo}
We now reach the claim the paper turns on: the axis that transfers across
generators is the axis that over-flags formal humans, so the unfairness
cannot be removed without giving up cross-generator detection.

\begin{figure}[t]
  \centering
  \includegraphics[width=\columnwidth]{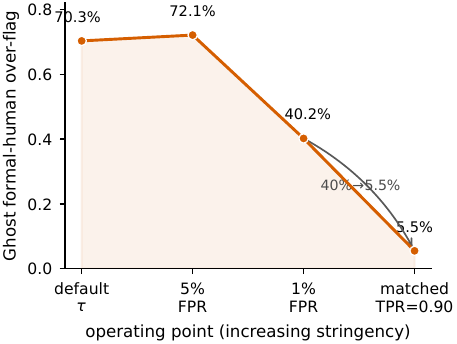}
  \caption{The deployment frontier. Same ELECTRA detector, threshold only: the
    Ghost formal-human over-flag falls from $40\%$ at $1\%$ FPR to $5.5\%$ at
    matched-TPR$=0.90$. The harm is an operating-point phenomenon, large where
    detectors actually ship.}
  \label{fig:frontier}
\end{figure}

The over-flag is an operating-point phenomenon, and structural where it
counts (Figure~\ref{fig:frontier}). How large the harm is depends entirely
on the threshold: the same ELECTRA detector over-flags formal-native essays
(Ghost) at $40\%$ at a $1\%$ false-alarm rate but only $5.5\%$ at matched-TPR.
% S1.267
This rebuts the ``just a calibration bug'' reading twice over. First, $40\%$
is the regime detectors deploy in, and the over-flag is paradigm-wide at the
default threshold (cross-architecture $0.70$/$0.67$/$0.73$ for E/R/D, the
tightest band in the paper).
% S1.282
Second, the part that survives recalibration is real: on Ghost the over-flag is length-robust and
structural, whereas NYT's is partly a length effect that recalibration clears.
% S1.276

\begin{figure}[t]
  \centering
  \includegraphics[width=\columnwidth]{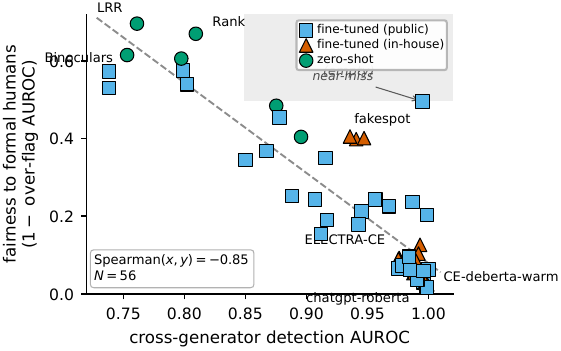}
  \caption{Fairness$\leftrightarrow$generalization frontier across 56 deployed
    detectors (20 detectors weak on both axes excluded: cross-generator AUROC
    $\le0.85$ \emph{and} fairness $\le0.5$). Spearman $-0.85$; the
    fair-and-general region is empty, and the closest near-miss
    (\emph{fakespot}) sits at the boundary at $(0.995, 0.4955)$. Harm is
    sharpest for public detectors.}
  \label{fig:tradeoff}
\end{figure}

Across deployed detectors, fairness to formal humans and cross-generator
robustness do not co-exist (Figure~\ref{fig:tradeoff}): the
fair-and-general region is empty across 56 detectors, and the closest near-miss,
\emph{fakespot}, sits at the boundary at $(0.995, 0.4955)$. Fairness and
cross-generator detection trade off at Spearman $-0.85$; because fairness here
is just a low over-flag, the over-flag and the cross-generator signal are two
readings of one axis, typicality at once the signal and the bias.
% S1.279 + Fig-4 expansion ROUND 3 2026-05-26: N=56 curated-strongest from the principled set of 76 (FIG4_EXPANSION_RESULTS_2026-05-26_round3.md); 20 detectors weak on both axes (x<=0.85 AND fairness<=0.5) excluded as uninformative about the trade-off. Top-right (fair+general) empty across all 76. Full-honest set N=76, Spearman -0.78. Round-2 N=20/-0.88 backup in figs/tradeoff_BACKUP_round2_N20.pdf.
The harm is sharpest for \emph{public} detectors, since an in-house detector
can recalibrate on its own formal-human labels.

The transferring axis is the formal-harm axis, and the harm is
localized. On the warmstart representation whose typicality reading transfers
across generators (AUROC $0.83$), the formal-human false-positive floor is an
order of magnitude above the informal one (Ghost $0.103$ versus informal
$0.011$); on the deployed head the gap persists, Ghost $0.089$ against
HC3 $0.017$, close to the informal-human Bayes floor ($\le 2.4\%$).
% S1.287
The harm is targeted, not detector incompetence: where register matches,
detection is near-perfect.
% S1.293
This is why the bias resists removal. Standard debiasing assumes the
offending feature is a spurious correlate to be discarded; here it is the
\emph{only} feature that generalizes. The invariant is the confound.
No intervention we test removes the harm while preserving
cross-generator detection, and the failure is mechanical, not incidental.
To catch $90\%$ of held-out, novel-generator AI, an ELECTRA detector must
flag $69\%$ of cross-corpus formal human writing, far more than at the
in-distribution matched-TPR of Figure~\ref{fig:frontier} because catching
novel generators forces a much lower threshold; same-domain the cost is far
milder ($0.13$), so the no-go is the cross-corpus, novel-generator
regime, not a claim about every population.
% S1.288 (BL#4: EvoBench + modern-commercial, not NYT-news-AI 0.02)
The failure is Simpson-style: a fairness penalty zeros the
\emph{within}-population score-typicality correlation, but the over-flag is a
between-population level shift. The formal-human logit gap thus moves the
wrong way ($-0.27$ to $+1.67$): the penalty acts on the within-group slope
while the harm lives in the between-group level.
% S1.266 (wrong-lever)

\begin{table*}[t]
  \centering
  \small
  \setlength{\tabcolsep}{4.5pt}
  \begin{tabular}{l l c l}
    \toprule
    Intervention family & Formal-human over-flag & ID-AUROC & Cross-gen.\ transfer \\
    \midrule
    \multicolumn{4}{l}{\emph{Mode 1: transfer held, over-flag not removed (AUROC-neutral relocation)}}\\
    Inference $-\varepsilon$         & $0.055\to0.047$       & $0.999$ (flat) & held ($\Delta\le0.005$) \\
    LEACE (closed-form erasure)      & $+0.107$              & held           & held (axis nulled $5.0\to6\times10^{-6}$) \\
    Invariance (IRM/V-REx/CORAL)     & $0.046\to0.040$ (E)$^{\dagger}$       & $0.999$        & held (xferEvo $0.97$) \\
    \midrule
    \multicolumn{4}{l}{\emph{Mode 2: transfer eroded, over-flag still not removed}}\\
    Correlation penalty (pooled)     & $0.40\to0.98$ (R)     & held           & collapses (${\sim}0.9\to0.04$--$0.21$) \\
    Correlation penalty (stratified) & $0.047/0.41/0.191$ (worst) & held (R $\to0.89$) & erodes (R $0.88\to0.05$) \\
    Adversarial (GRL)                & $0.21\to0.32$ (NYT, $\tau_{\mathrm{def}}$) & $0.96\to0.89$  & destroyed ($0.76\to0.16$) \\
    Training-time projection         & $0.002\to0.025$ (NYT) & $0.9997$       & held; probe recovers axis \\
    \midrule
    \emph{FT decoder (relocation)}   & relocates             & n/a            & AI-vs-FCE $0.993\to0.835$ \\
    \bottomrule
  \end{tabular}
  \caption{The unified no-go. Six intervention families across three
    architectures (ELECTRA/RoBERTa/DeBERTa) and a fine-tuned decoder: none lowers
    the formal-human over-flag while holding in-distribution AUROC and
    cross-generator transfer. Over-flag is the Ghost rate at matched-TPR$_{0.90}$
    on ELECTRA unless tagged: R/D $=$ RoBERTa/DeBERTa, NYT $=$ NYT population,
    $\tau_{\mathrm{def}}=$ default threshold, worst $=$ worse of Ghost/NYT; a
    three-value cell is E/R/D (Ghost at matched-TPR was not measured for the
    adversarial and projection arms). Two modes:
    inference $-\varepsilon$ and LEACE hold transfer but only relocate the
    over-flag (AUROC-neutral); correlation, adversarial, and projection penalties
    erode or destroy transfer and still do not debias.
    $^{\dagger}$The IRM/V-REx/CORAL row's small E-only decrease is offset on
    R (transfer collapses $0.87\to0.10$) and D (NYT relocation
    $0.34\to0.77$), Appendix~\ref{app:nogo}.
    Scoped to EvoBench $+$ modern-commercial generators;
    full panel in Appendix~\ref{app:nogo}.}
  \label{tab:nogo}
\end{table*}

Across six intervention families and three architectures
(Table~\ref{tab:nogo}) the result holds, in two distinct modes. In the first, our $-\varepsilon$ operator and LEACE hold cross-generator
transfer but do not debias: LEACE provably nulls the linear axis
($5.0\to6\times10^{-6}$), yet the Ghost over-flag at matched-TPR rises by
$+0.107$. In the second, correlation and adversarial penalties erode transfer
instead (a gradient-reversal detector collapses cross-LM detection from $0.76$
to $0.16$), and even a within-population-stratified penalty, immune to the
Simpson cancellation that defeats the pooled one, stays non-corrective.
% S1.250 (LEACE) ; S1.266 (stratified pooled|corr|->0.0001 vs 0.9576) ; S1.268 (DANN)
Objectives built to force cross-generator invariance fare no better: IRM,
V-REx, CORAL, and cross-generator contrastive training do not remove the
over-flag without paying cross-generator detection or relocating the harm to
another population (three architectures, three seeds), because the
cross-generator invariant they enforce is the typicality confound
(Appendix~\ref{app:nogo}).
% H4 (2026-05-25_claude_H2_H4_RESULTS): ELECTRA 0.91->0.88 hold xfer; R collapse 0.87->0.10 xLM; D trade 0.687->0.626 + NYT 0.34->0.77; 3-arch 3-seed
No ensemble escapes either, the result extends into fine-tuned decoders, and the
coupling is scale-invariant to 8B (Appendix~\ref{app:nogo}). The over-flag
is operating-point-relocatable but, for any method that must generalize to
unseen generators, surgically un-removable.
% S1.289 (ensemble 0.198/0.076) ; S1.273 (decoder 0.993->0.835 in Table 1) ; S1.266 (8B scale-inv) -- numbers in app/table
%
The no-go has the structure of a theorem. The empirics are not a list
of recipes that happened to fail; they instantiate three results, proved in
Appendix~\ref{app:theory}, that follow from the rank-one geometry and foreclose
every on-axis fix. A population-blind recalibration can relocate the
formal-human over-flag and trade it against missed detections, but cannot
equalize it (Proposition~\ref{prop:recal}); at fixed detection, any detector
that reads only the typicality axis has its over-flag frozen at that
population's overlap with AI on the axis (Proposition~\ref{prop:frontier}); and
escaping that floor requires an off-axis move, which is then necessary and
sufficient to debias while preserving in-distribution detection
(Proposition~\ref{prop:dichotomy}). The signatures are empirical:
recalibration-invariance (eighteen training variants in $[0.026,0.150]$,
cross-seed spread $0.018$ versus cross-method $0.386$), the frozen matched-TPR
over-flag numbers, and the off-axis revival of \S\ref{sec:tool}. The result is a
detection-specific instance, in the spirit of the fairness-impossibility
theorems \citep{chouldechova2017,kleinberg2017}, with one twist: what plays the
role of the protected attribute, how typical a writer's prose is, is not merely
correlated with the decision statistic, it \emph{is} the decision statistic,
which is why no post-hoc reconciliation exists.
% S1.267 (18 variants [0.026,0.150]; seed 0.018/method 0.386) ; S1.287 (R3) ; S1.269/S1.268 (R4). Prop 1-3 statements moved to App A.

Where the price is paid, precisely. A careful reader will note that
our own $-\varepsilon$ operator (\S\ref{sec:tool}) removes the typicality
direction at inference and is AUROC-neutral; this is not a self-refutation.
Removing the direction from an already-fine-tuned detector \emph{is}
detection-neutral, it relocates false positives without changing
separability.
% S1.270 (in-FT erasure detection-neutral)
The price appears instead in two places the no-go identifies: at
training time, where every effective typicality-removal collapses
cross-generator transfer (from $0.94$ to $0.36$ as the penalty strengthens,
while in-distribution AUROC moves only $0.999\to0.995$) with no fairness
dividend;
% S1.268 (the dial)
and in the decomposition, where the only sub-component that debiases is the
residual, the one that does not transfer.
% S1.287

\subsection{The tool: training-free, transferable, honest}
\label{sec:tool}
The decomposition is constructive: the same split that forecloses a cure
yields a training-free tool, a closed-form operator that predicts its own
effect. The $-\varepsilon$ operator of \S\ref{sec:methods} removes the
typicality component, and its per-text predictor is
accurate enough to act as an oracle: under a full composition swap its
chosen action matches the post-hoc optimum on $35$ of $36$ cells, and a
signed-$\varepsilon$ sweep is strict-Pareto on all $18$ configurations and
Pareto on all $27$ (predictor MAE $0.005$).
% S1.131 ; S1.111 / S1.125
The predictor is exact on linear and decoder heads and Taylor-accurate
($\varepsilon\le0.7$) on encoders.

The tool transfers to detectors we did not build, and beats published
debiasers. Because the predictor reads only a representation and a direction,
it diagnoses bias zero-shot on detectors trained by others ($R^2\ge0.987$).
% S1.275
Head to head it dominates the standard concept-erasure remedies: a
$25\times$ lower false-positive rate than INLP at matched recall (true-positive
rates within $\pm0.02$), and none of
RLACE's $18$ configurations converged. Against the zero-shot Binoculars foil,
the supervised representation catches sixfold more held-out-generator AI at a
matched false-alarm rate (true-positive rate $0.966$ versus $0.159$).
% S1.115

\begin{figure}[t]
  \centering
  \includegraphics[width=\columnwidth]{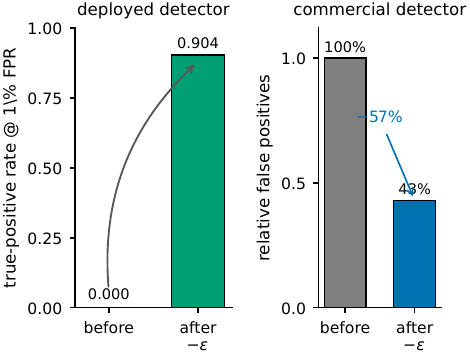}
  \caption{Dead-detector revival. At a $1\%$ false-alarm rate on formal NYT
    humans a shipped detector catches no AI (true-positive rate $0$), because
    formal humans saturate its score alongside AI; a closed-form, training-free
    axis ablation separates them, lifting the rate to $0.904$ (AI-vs-NYT AUROC
    $0.955\to0.991$ on this deployed detector). This relocates the operating
    point; cross-generator separability is unchanged (\S\ref{sec:tool}).}
  \label{fig:revival}
\end{figure}

The headline practical win is reviving a dead deployed detector: one that
catches nothing at a $1\%$ false-alarm rate has its true-positive rate lifted
from $0$ to $0.904$ (Figure~\ref{fig:revival}). On a commercial detector it
cuts false positives by $57\%$, and at the default threshold it cuts the NYT
over-flag from $0.374$ to $0.252$.
% S1.94 ; S1.186 ; S1.278 (default-tau NYT calibration, rho=-1.00)

What it does not do is erase the bias. Consistent with the no-go, removing
the axis leaves the detector's cross-generator separability unchanged
(AUROC-neutral): it relocates false positives along the operating curve rather
than removing them. It is doubly bounded: on 2025-commercial generators it
\emph{costs} detection at the operating point (the true-positive rate at a
$1\%$ false-alarm rate falls $0.818\to0.741$, and the Ghost over-flag rises),
and it is encoder-paradigm-specific, null on fine-tuned decoders whose head
decides off the typicality axis.
% S1.257 / S1.278 / S1.272
On its calibration leg the operator joins recalibration and group-DRO as one
relocation class; the leverage law and the encoder-versus-decoder mechanism are
in Appendix~\ref{app:predictor}.
% S1.237
It is, by construction, a relocator, not a cure.

Finally, the operator is not specific to AI detection: the same per-text
ablation reduces occupational-gender bias on the Bios task across all three
architectures (strict-Pareto), with further dialect and content-moderation
transfers in Appendix~\ref{app:crosstask}.
% S1.139 (Bios) ; S1.135 / S1.27 (appendix; AAVE = transfer demo only)

\subsection{Predictable, structural fairness}
\label{sec:fairness}
Because the harm rides one axis, it is forecastable, per population,
per detector, without access to the detector's internals. A population's
exposure is predicted by how typical its writing is: a cosine-to-AI
diagnostic predicts the per-population over-flag ordering at a $1\%$ false-alarm
rate (Spearman $0.986$) across seven populations, on detectors it never saw. This diagnostic is
not the decision statistic re-read: the typicality axis is near-orthogonal to
the probe-derived linear approximation of the decision (cosine $0.103$;
App.~\ref{app:fairness}).
% S1.280 / S1.275 / S1.274 (circularity)
The full ten-population matched-TPR panel is in Appendix~\ref{app:fairness}.

The over-flag is common across deployed fine-tuned detectors, a
property of the class, not one bad system. Four of six deployed fine-tuned
detectors over-flag formal writing at matched length (chatgpt-detector-roberta
$0.97$, our ELECTRA $0.91$, the OpenAI RoBERTa detectors
\citep{solaiman2019} openai-large $0.72$ and openai-base $0.74$); the two
exceptions are a length artifact (RADAR; \citealp{hu2023radar}) and a
no-over-flag detector, so the property is common rather than universal.
% S1.280 (BL#8: 4/6 common, not universal; openai-base 0.74 is register/length-controlled)

The harm falls on formal-native fluent writing, the one case no
intervention protects. Naming the harmed group precisely also pre-empts the
reading that this is anti-non-native bias. Non-native English (FCE) is cleanly
separable from out-of-distribution AI (AUROC $0.95$--$0.99$) and is
protectable across populations \emph{without any FCE labels} (its over-flag
drops to $\le 0.13$ on RoBERTa and DeBERTa even at the novel-generator operating point), whereas formal-native essays
(Ghost) stay over-flagged on every architecture, the surgically un-removable
case.
% S1.292
The floor on dialect text (CORAAL, Hinglish, FCE at matched-TPR) is near
zero, so the harm is anti-formal-prose, not anti-non-standard-English.
% S1.266
Nor can the harmed writer easily escape. Clearing the flag means writing less
typically: on the internal axis a formal-native essay must be made more
atypical than up to $76\%$ of natural humans to cross the threshold (versus
$10$--$29\%$ for journalism). On the strongest detectors this cost exceeds the
coherent-rewriting regime, and much over-flagged formal writing cannot clear at
all (Appendix~\ref{app:fairness}).
% L5+RESCUE (2026-05-25_agent2_L4L5L6): N* internal Ghost 49-76% vs NYT 10-29%, Ghost>NYT 3/3; gen-invariant; non-circular

The axis says ``this is AI,'' not which model produced it. Consistent
with the decomposition, generator identity lives in the residual rather than
the transferring axis: typicality attributes the generator only at the random
one-dimensional floor ($1.4\times$ chance, against $6\times$ for the full
representation). The residual carries the modest, above-chance generator
signal, which we develop in Appendix~\ref{app:crosstask}.
% S1.291 (BL#10: modest above-chance, not a strong fingerprint)

\section{Discussion}
\label{sec:discussion}
The blind spot is the whole ecosystem, and it scales. Shopping among
detectors is false comfort: eight detectors and a zero-shot perplexity
detector share a single axis (its first principal component explains $32\%$ of
the variance), and no ensemble over them escapes (\S\ref{sec:nogo}).
% S1.289
The typicality coupling is scale-invariant up to 8B parameters ($-0.24$ at 8B
$\approx -0.245$ at 110M), so this is a property of the detection paradigm,
not an artifact of small encoders. Nor is it specific to English prose: the
over-flag and the inherited-residual decomposition replicate in a typologically
distant language (Chinese) and in a non-natural-language domain (Python code;
Appendix~\ref{app:xling}).
% S1.266 ; xling H1/H3 (2026-05-25_agent1_H1_H3_xling_code_RESULTS)

Honest mitigations relocate the harm; they do not cure it.
In-population calibration helps the population it is tuned for, it cuts the
Ghost over-flag from $0.68$ to $0.42$, but it relocates the harm; under aggressive supervision, another population's over-flag can be
driven as high as $0.80$.
% S1.292
The $-\varepsilon$ operator, recalibration, and group-DRO are one relocation
class (\S\ref{sec:tool}); the only escape we find is explicit formal-human
supervision, which trades the problem for a labeling burden on every
population.

The harm is era-invariant. Because it is driven by the typicality of
the human reference rather than by the generator, it will not fade as
generators improve: the matched-protocol over-flag is unchanged across eras
($0.986$ versus $0.974$), and across a 14-year span of human reference text at
matched formality (Appendix~\ref{app:fairness}).
% S1.251 (BL#6: no era trend) ; L6 human-era (2026-05-25_agent2_L4L5L6)
Detectors trained and deployed today will keep penalizing typical, formal
human writing regardless of how models evolve.

% ===========================================================================
% LIMITATIONS  (MANDATORY -- desk-reject if missing; unnumbered). ~0.75pp.
% One line each; none weakens a scoped headline. De-hedged per audit.
% ===========================================================================
\section*{Limitations}

\paragraph{Scope of ``read, not learned.''}
Our claim concerns the \emph{bias} axis, not fine-tuning's total
contribution: a matched-perplexity authorship residual remains, so typicality
is \emph{dominant}, not exclusive. (This residual also absorbs the
RoBERTa-base geometric flip, on which the $-\varepsilon$ operator raises the
over-flag rather than lowering it.) ``Read, not learned'' therefore means an
inherited representation, a non-generalizing residual, and label-efficiency,
not that a frozen probe beats fine-tuning, which it does not (clean transfer:
frozen $0.752 <$ fine-tuned $0.831$).

\paragraph{Method scope.}
The label-efficient probe is single-backbone-fragile; robust deployment needs
a multi-backbone max-pool. The closed-form predictor is exact only on linear
and decoder heads; on fine-tuned encoders it is first-order accurate for
$\varepsilon\le0.7$.

\paragraph{Scope of the no-go, and anchored magnitudes.}
The no-go is the cross-corpus, novel-generator regime: within domain, AI and
formal humans still separate and the over-flag is milder ($0.13$--$0.23$).
In-population calibration is genuine but partial, and relocates harm. Finally,
magnitudes are anchored: to the training recipe (HC3, within-recipe; encoders
and decoders warm-started, RoBERTa raw-base, 8B a different substrate) and to
the generator era (the raw axis falls toward chance on modern anchors). The
era-anchoring is reviewer-checkable and is the same reference-formality
dependence that makes the harm era-invariant (\S\ref{sec:discussion}).

\bibliography{custom}

\appendix

\section{Theory: premises and proofs}
\label{app:theory}

We make the claims of \S\ref{sec:nogo} precise. All results are population-level,
scope the typicality axis to the \emph{bias readout} (dominance, not
exclusivity), and treat the rank-one premise as an explicit conditional.

\paragraph{Notation.}
For a text $x$ write $\cls=\cls(x)\in\mathbb{R}^{n}$ for its representation,
$\dtyp$ the unit typicality axis, $s=\langle\cls,\dtyp\rangle$ its projection,
and $\cls_\perp=\cls-s\,\dtyp$ the off-axis component (so
$\cls=s\,\dtyp+\cls_\perp$, $\cls_\perp\perp\dtyp$). For a class $Q$, let
$L_Q(s)$, $L_Q(\cls_\perp)$ be the laws of $s$, $\cls_\perp$ under $Q$. For a
threshold $\theta$ the survival function is $F_i(\theta)=\Pr_{x\sim P_i}[s\ge\theta]$
and $G(\theta)=\Pr_{\mathrm{AI}}[s\ge\theta]$. Write the AI--population total
variations on and off the axis as
$\delta_i^{s}=\mathrm{TV}(L_{\mathrm{AI}}(s),L_{P_i}(s))$ and
$\delta_i^{\perp}=\mathrm{TV}(L_{\mathrm{AI}}(\cls_\perp),L_{P_i}(\cls_\perp))$.
$P_+$ is an operating-point-robust typical population (Ghost), $P_-$ the
atypical anchor (HC3).

\paragraph{Premises (each a conditional with its empirical anchor).}
\begin{description}
\item[\textnormal{(B1) Rank-one bias readout.}] For human text the deployed
score is a population-independent, non-decreasing function of the typicality
coordinate, $f(x)=\varphi(s(x))$; equivalently, conditioned on $s$ the score
law is common across human populations, and the flag probability is $a(s(x))$
with $a$ non-decreasing. \emph{Anchor:} the bias
decision rank is $k^{*}{=}1$ in-distribution rising to $3$--$10$ out of
distribution, and a single head direction reproduces the fine-tuned AUROC
within $0.02$ on $36/36$ cells (Appendix~\ref{app:nogo}); we support B1 by this
OOD-rank contrast, not by a PC1--logit $R^2$.
\item[\textnormal{(B2) Typicality separation.}] $F_+(\theta)\ge F_-(\theta)$
for all $\theta$, strict on the deployment range. \emph{Anchor:} typical
writing has higher $s$ and FPR at matched length; $P_+$ over-flags above $P_-$
at every operating point on the range.
\item[\textnormal{(B3) AI--$P_+$ overlap on the axis.}] $\delta_+^{s}$ is small
enough that the over-flag persists at useful detection. \emph{Anchor:} Ghost
FPR at matched-TPR$=0.90$ is $0.056/0.176/0.436$ (E/D/R), strictly positive;
the deployed detector has TPR $0$ at $1\%$ FPR before intervention.
\item[\textnormal{(B4) Separable off-axis detection signal.}]
$\delta_+^{\perp}$ is large on a fixed generator; \S\ref{sec:read} shows this
off-axis signal does not transfer to unseen generators. \emph{Anchor:}
residualizing the logit on the raw representation leaves AUROC $0.82/0.63/0.86$
(RoBERTa's $0.63$ the boundary case); the off-axis detector reaches
AI-versus-Ghost AUROC $0.87$--$0.97$, and the $-\varepsilon$ ablation realizes
$(\mathrm{FPR}_+,\mathrm{TPR})=(0.01,0.904)$.
\item[\textnormal{(R) Recalibration class.}] A population-blind post-hoc rule:
it sees $x$ only through the deployed score $f(x)$ and applies a non-decreasing
transform and threshold (optionally randomized between thresholds).
\end{description}

\begin{proposition}[Recalibration no-go]
\label{prop:recal}
Suppose that across human populations the flag decision factors through the
single typicality coordinate $s(x)=\langle\cls(x),\dtyp\rangle$ (the
rank-one bias premise). Then at any fixed detection rate the per-population
false-positive profile is determined: a population-blind recalibration can
\emph{relocate} the over-flag across populations and \emph{trade} it against
missed detections, but cannot equalize it (the only recalibration that
equalizes is the non-discriminating chance rule). Equalizing false positives at
fixed detection requires an intervention that does not factor through $s$.
\end{proposition}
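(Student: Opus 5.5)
Under premises (B1), (B2) and (R), I would reduce every population-blind recalibration to a single non-decreasing flag function of the typicality coordinate. Then I would read off each population's false-positive rate as an integral of that function against the population's law of $s$.

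\textbf{Step 1 (reduction).} By (R) a recalibration sees $x$ only through $f(x)$. It then applies a non-decreasing transform and a threshold, possibly randomized. By (B1), for human text $f=\varphi(s)$ with $\varphi$ non-decreasing. Composing the two, the flag probability for a human text is $b(s(x))$ for some non-decreasing $b:\mathbb{R}\to[0,1]$. Any non-decreasing $[0,1]$-valued function is a mixture of step functions $\mathbf{1}[s\ge\theta]$, so
\[
\mathrm{FPR}_i(b)=\int F_i(\theta)\,d\mu_b(\theta),
\]
where $\mu_b$ is a sub-probability measure on thresholds, possibly with mass at $\pm\infty$. For the AI class I would either use the same representation with $G$, or simply treat the detection rate $\int G\,d\mu_b$ as the constraint. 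Either way, fixing the detection rate restricts $\mu_b$ to a convex set. Within that set each $\mathrm{FPR}_i$ is a fixed linear functional of $\mu_b$. This gives the "determined profile" claim: once the operating point $\mu_b$ is chosen, every population's false-positive rate is fixed by $F_i$. The recalibrator's only freedom is the choice of $\mu_b$. Moving $\mu_b$ relocates mass between populations and trades false positives against $\int G\,d\mu_b$, which is the "relocate and trade" half of the statement.

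\textbf{Step 2 (no equalization).} I would compute
\[
\mathrm{FPR}_+-\mathrm{FPR}_-=\int\bigl(F_+-F_-\bigr)\,d\mu_b .
\]
By (B2) the integrand is $\ge0$ everywhere and strictly positive on the deployment range $\Theta$. Hence the difference vanishes iff $\mu_b(\Theta)=0$. I would then argue that a $\mu_b$ with no mass on $\Theta$ puts all its mass where $F_\pm$ coincide. Under the reading that $\Theta$ is the whole set of thresholds at which the rule discriminates (both survival functions strictly inside $(0,1)$), that mass sits at $\pm\infty$. So $b$ is a constant $c$: it flags every text with probability $c$ regardless of $x$, which is the non-discriminating chance rule. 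Such a rule has detection equal to its FPR. It therefore cannot attain a fixed detection rate above its false-alarm level, and equalization is impossible at any nontrivial operating point.

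\textbf{Step 3 (necessity of leaving $s$).} The contrapositive of Steps 1–2 shows that any rule equalizing false positives at fixed nontrivial detection cannot have the form $b(s)$. It must therefore depend on $x$ through something other than $s$, for instance $\cls_\perp$ or population labels. That is the final sentence of the proposition. (B4) is not needed here; I would only remark that it makes such an off-axis rule available, anticipating Proposition~\ref{prop:dichotomy}.

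\textbf{Main obstacle.} I expect the delicate point to be Step 2's identification of "$\mu_b(\Theta)=0$" with "chance rule". If $F_+$ and $F_-$ agree at some finite threshold outside the deployment range, for example in extreme tails where both are $0$ or $1$, a threshold there is technically equalizing without being literally constant. I would handle this by stating (B2) as strict wherever $0<F_-<1$ or $0<F_+<1$, so that any threshold where the two agree is degenerate on both human populations. A second option is to define the chance rule as any rule that is constant on the joint support of the human laws. I would also need care with randomization, that is, mass of $\mu_b$ split between $+\infty$ and a finite $\theta$. The mixture representation handles this linearly, so it should only require stating the decomposition explicitly.
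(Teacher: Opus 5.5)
Your Steps 2 and 3 match the paper's parts (iii) and (iv) in substance. The sign-lock is $\mathrm{FPR}_+-\mathrm{FPR}_-=\int(F_+-F_-)\,d\mu\ge0$, and the contrapositive gives the last sentence, which indeed needs no (B4).

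The gap is the proposition's first claim: at a fixed detection rate the false-positive profile is \emph{determined}. You concede that fixing the detection rate only confines $\mu_b$ to a convex set on which each $\mathrm{FPR}_i$ still varies. You then read ``determined'' as ``fixed once $\mu_b$ is chosen''. That holds for every decision rule, so it proves nothing.

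The missing idea is the paper's part (ii), a pinning argument. For a deterministic population-blind threshold, the AI detection rate $\Pr_{\mathrm{AI}}[f\ge\tau]$ is strictly decreasing in $\tau$ at the target $t\in(0,1)$ (atomless, no support gap). So $\Pr_{\mathrm{AI}}[f\ge\tau]=t$ has a unique solution $\tau$. Through $f=\varphi(s)$ on human text, this fixes the $s$-threshold $\theta$, hence the whole profile $(F_i(\theta))_i$. That profile is therefore a recalibration invariant. The paper's part (iv) and Proposition~\ref{prop:frontier} rely on this invariance to conclude that even \emph{lowering}, not just equalizing, a matched-detection false-positive rate is not a recalibration.

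Your convex-set observation is correct, and it shows the pinning can only hold for deterministic thresholds. Mixing two thresholds with the same AI detection rate moves the profile along a chord. So state and prove the invariance for deterministic rules and treat randomization separately, rather than weakening the claim to a tautology.

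There is a related slip. (B1) only makes \emph{human} scores factor through $s$. The detection constraint therefore belongs on the score scale, $\Pr_{\mathrm{AI}}[T(f)\ge\tau]$, as in the paper. Writing it as $\int G\,d\mu_b$ requires the extra assumption that AI scores also factor through $\varphi(s)$.

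This matters in Step 2. Suppose $\mu_b$ puts mass at finite thresholds beyond the support of both human laws. Such a rule flags no human, so it equalizes false positives at zero, yet it may still flag AI. So ``constant on human text'' does not give ``detection equals false-alarm rate''. Strengthening (B2) as you propose does not close this, because (B2) constrains only the two human laws. What you need is that the pinned operating point lies where both human survival functions are non-degenerate. The paper's ``deployment range'' and ``partially flagging both populations'' conditions are meant to supply this, though only implicitly.
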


\paragraph{Proof of Proposition~\ref{prop:recal} (recalibration no-go).}
\emph{(i)~Reduction.} A deterministic recalibration flags iff $T(f(x))\ge\tau$
with $T$ non-decreasing. By (B1) the deployed score on human text is monotone
in $s$ with a population-independent relation, so $f=\varphi(s)$ for
non-decreasing $\varphi$; then $\{s:T(\varphi(s))\ge\tau\}=[\theta,\infty)$ and
$\mathrm{FPR}_i=F_i(\theta)$ (a randomized rule gives $\int F_i\,d\mu$).
\emph{(ii)~Fixed-detection invariance.} Fixing
$\Pr_{\mathrm{AI}}[f\ge\tau]=t\in(0,1)$, where the AI detection rate is strictly
decreasing at $t$ (atomless, no support gap), pins $\tau$, hence $\theta$, hence
every $F_i(\theta)$: the matched-detection FPR
profile is a recalibration invariant. \emph{(iii)~Ordering sign-lock.} Adding
(B2), $\mathrm{FPR}_+-\mathrm{FPR}_-=\int(F_+-F_-)\,d\mu\ge0$, strict at any
threshold partially flagging both populations; equality forces the
non-discriminating chance rule, so no recalibration equalizes $P_+,P_-$ while
detecting AI above chance. \emph{(iv)~Off-axis necessity.} By (ii), lowering a
matched-detection FPR is not a recalibration; since the human flag is a
threshold on $s$ (B1), it requires changing $x\mapsto s(x)$, an off-axis
intervention, feasible by (B4). \qed

\begin{corollary}[Off-axis operator]
\label{cor:operator}
The intervention of part~(iv) is realized exactly on a linear head: removing
the axis component, $\cls\mapsto\cls-\varepsilon\,s\,\dtyp$, changes the logit
by exactly $\Delta\mathrm{logit}=-\varepsilon\,s\,\langle\wh,\dtyp\rangle$
(Eq.~\ref{eq:predictor}). It does not factor through any threshold on $s$ and
leaves $\cls_\perp$, hence the separable signal of (B4), intact. It is exact
for linear and decoder score heads and first-order accurate
($\varepsilon\le0.7$) on fine-tuned encoders.
\end{corollary}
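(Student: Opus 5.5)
The corollary makes three claims: an exact logit identity on a linear head, an off-axis property, and an accuracy statement for nonlinear heads. I will verify them in that order.

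\emph{Linear heads.} Write the decision as $\mathrm{logit}(x)=\langle\wh,\cls(x)\rangle+b$ with $\wh$ constant. Substitute the ablated representation $\cls-\varepsilon\,s\,\dtyp$. Linearity gives $\Delta\mathrm{logit}=\langle\wh,-\varepsilon\,s\,\dtyp\rangle=-\varepsilon\,s\,\langle\wh,\dtyp\rangle$. This is Eq.~\ref{eq:predictor} with no remainder. The same computation covers decoder score heads, provided the score is an affine readout of the representation being ablated. Under that premise the Jacobian $\wh(x)$ does not depend on $x$, and the identity holds exactly.

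\emph{Off-axis property.} Use the decomposition $\cls=s\,\dtyp+\cls_\perp$ from the Notation paragraph. The operator maps this to $(1-\varepsilon)s\,\dtyp+\cls_\perp$. So $\cls_\perp$ is unchanged pointwise, and its law under every class is unchanged as well. In particular $\delta_+^{\perp}$ is preserved, so the separable signal of (B4) survives.

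Next, the new logit is $\langle\wh,\cls\rangle+b-\varepsilon\,s\,\langle\wh,\dtyp\rangle$. Using the decomposition again, this equals $(1-\varepsilon)\,s\,\langle\wh,\dtyp\rangle+\langle\wh,\cls_\perp\rangle+b$. It therefore reweights the on-axis and off-axis contributions relative to each other. That reweighting changes the map $x\mapsto$ score in a way that is not a monotone transform of the original score $f$.

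To make ``does not factor through any threshold on $s$'' precise, I would argue by contradiction. Suppose the new score were a function of $s$ alone on human text. Then $\langle\wh,\cls_\perp\rangle$ would have to be $s$-measurable. That holds only in the degenerate case where the off-axis readout is already a function of $s$, which (B4) excludes. The operator is therefore of the type required by part~(iv) of Proposition~\ref{prop:recal}.

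\emph{Nonlinear encoders.} Apply a first-order Taylor expansion of $\mathrm{logit}$ at $\cls(x)$ along the displacement $-\varepsilon\,s\,\dtyp$. This gives $\Delta\mathrm{logit}=-\varepsilon\,s\,\langle\wh(x),\dtyp\rangle+R$, with $|R|\le\tfrac12\varepsilon^2s^2\sup\|\nabla^2\mathrm{logit}\|$ along the segment.

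This step is the main obstacle. The bound is only qualitative, because I have no a priori control of the head's curvature. The specific threshold $\varepsilon\le0.7$ cannot be derived from the stated premises. I would state the remainder bound, note that it is $O(\varepsilon^2)$, and explicitly rest the numerical range on the empirical per-text $R^2=0.993$--$1.000$ reported in \S\ref{sec:methods}.
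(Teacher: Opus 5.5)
The parts of your proposal that match the paper are fine. These are the linear-head substitution, the observation that $\cls\mapsto(1-\varepsilon)s\,\dtyp+\cls_\perp$ fixes $\cls_\perp$ pointwise, and resting $\varepsilon\le0.7$ on the empirical $R^2$. The paper writes no separate proof; its support is the same substitution, the $\varepsilon=1$ reading in Proposition~\ref{prop:dichotomy}(iii), and the fits of Appendix~\ref{app:predictor}.

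\textbf{The gap: your argument that the operator ``does not factor through any threshold on $s$.''} You suppose the ablated score is a function of $s$ alone on human text, and claim this forces a degenerate case that (B4) excludes. Under (B1), the standing premise of Propositions~\ref{prop:recal}--\ref{prop:dichotomy} and of part~(iv) itself, that is not a supposition but a consequence. With a linear head, $\langle\wh,\cls\rangle+b=\psi(s)$ on human text for some $\psi$. Hence $\langle\wh,\cls_\perp\rangle=\psi(s)-s\langle\wh,\dtyp\rangle-b$ is $s$-measurable there. The ablated logit $\psi(s)-\varepsilon s\langle\wh,\dtyp\rangle$ is then again a function of $s$ on human text. (B4) does not rule this out: $\delta_+^{\perp}$ is a total variation between the AI and $P_+$ laws of $\cls_\perp$. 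That cross-class quantity is fully compatible with a linear functional of $\cls_\perp$ being determined by $s$ within human text. So your contradiction, as written, would refute (B1) rather than prove the claim. Likewise, your assertion that the new score is not a monotone transform of $f$ is unproved, and on human text it can be one (e.g.\ $\psi$ affine with slope exceeding $\varepsilon\langle\wh,\dtyp\rangle$).

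\textbf{How to repair it.} The non-factoring must be located on the pooled population that includes AI text, where (B1) imposes nothing. After ablation, the AI detection rate is governed by $\langle\wh,\cls_\perp\rangle$ on AI text, which (B1) does not tie to $s$. That is what changes the matched-detection threshold and moves the detector off the P--P curve of Proposition~\ref{prop:frontier}.

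The clean certificate is the contrapositive of the variational bound in Proposition~\ref{prop:dichotomy}(i), which does not depend on the corollary. Any accept rule that is one and the same function of $s$ on AI and human text satisfies $\mathrm{TPR}-\mathrm{FPR}_+\le\delta_+^{s}$. So a rule realizing a larger gap cannot factor through $s$.

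Note that (B4), read as a total-variation statement about the whole law of $\cls_\perp$, does not guarantee that the particular readout $\langle\wh,\cls_\perp\rangle$ exploits that separation. This step therefore needs the realized post-ablation performance, and that is exactly what the paper invokes: the $(0.01,0.904)$ operating point and the AI-versus-Ghost AUROC $0.87$--$0.97$ in the (B4) anchor.
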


\begin{proposition}[Matched-detection frontier]
\label{prop:frontier}
At fixed detection, the achievable per-population false-positive profile is a
single point on the populations' typicality-axis probability--probability
curve $C(u)=F_{+}(F_{-}^{-1}(u))$. The typical/atypical gap is frozen at
$C(u)-u\ge 0$, and for any detector that reads only the axis, the typical
population's over-flag cannot fall below $t-\delta_{+}^{s}$, its total-variation
overlap with AI on the axis.
Recalibration slides along this curve; only an off-axis move reaches a lower
one.
\end{proposition}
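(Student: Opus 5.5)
The plan is to reuse the reduction of Proposition~\ref{prop:recal} and then add one coupling inequality for the floor. First, by (B1) and parts (i)--(ii) of the proof of Proposition~\ref{prop:recal}, every population-blind rule that reads only the axis flags a human text iff $s\ge\theta$ (or a mixture $\mu$ over such thresholds). Fixing the detection rate $G(\theta)=t$ pins $\theta=G^{-1}(t)$ under the atomless, no-gap assumption. So the achievable FPR profile is the single point $(F_-(\theta),F_+(\theta))$. Reparametrizing by $u=F_-(\theta)$ gives $F_+(\theta)=F_+(F_-^{-1}(u))=C(u)$, so the profile lies on the probability--probability curve. A randomized rule gives a convex combination of such points, which lies on the chord. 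I will state the result for deterministic thresholds and note that mixtures at fixed $t$ collapse to a single $\theta$ once $G$ is strictly decreasing.

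Second, the gap $C(u)-u\ge0$ is immediate from (B2). Since $F_+\ge F_-$ pointwise, we have $C(u)=F_+(\theta)\ge F_-(\theta)=u$. Because $\theta$ is frozen by $t$, no recalibration changes this gap. This is exactly part (ii) of Proposition~\ref{prop:recal} rewritten in curve coordinates. "Recalibration slides along this curve" then just says that varying $t$ (equivalently $\theta$) traces $C$.

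Third, the floor. For any flag set $A$ that is measurable in $s$, the total-variation bound gives $|\Pr_{\mathrm{AI}}[s\in A]-\Pr_{P_+}[s\in A]|\le\delta_+^{s}$. Applying this with $A=\{s\ge\theta\}$ and $\Pr_{\mathrm{AI}}[A]=t$ yields $F_+(\theta)\ge t-\delta_+^{s}$. The same inequality holds for any, not necessarily monotone, axis-only rule $a(s)\in[0,1]$, via $|\mathbb{E}_{\mathrm{AI}}a-\mathbb{E}_{+}a|\le\delta_+^{s}$. The floor therefore covers every axis-reading detector, not just thresholds. I expect this to be the step needing care: one must phrase "reads only the axis" as "the flag is a measurable function of $s$", so that the TV bound applies rather than the monotone-threshold structure.

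Finally, the claim "only an off-axis move reaches a lower curve" follows by contraposition. Any rule whose FPR$_+$ at detection $t$ lies below $C$ (or below $t-\delta_+^s$) cannot be a function of $s$ alone, so it must depend on $\cls_\perp$. (B4), together with Corollary~\ref{cor:operator}, shows such moves exist, since the separable off-axis TV $\delta_+^\perp$ can exceed $\delta_+^s$.
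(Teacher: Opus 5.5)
Your proof follows the paper's own argument step for step:

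\begin{itemize}
\item the reduction to half-lines $\{s\ge\theta\}$ borrowed from Proposition~\ref{prop:recal};
\item elimination of $\theta$ to get $C=F_+\circ F_-^{-1}$;
\item (B2) for $C(u)\ge u$;
\item the single-event total-variation bound at $E=\{s\ge\theta_t\}$ for the floor;
\item (B4) with Corollary~\ref{cor:operator} for the off-axis part.
\end{itemize}
Your extension of the floor to any $[0,1]$-valued $a(s)$ is correct. The paper makes the analogous move, for measurable accept sets, in part~(i) of Proposition~\ref{prop:dichotomy}. It is also the piece that survives randomization.

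One remark is false, and it affects your last step. Mixtures at fixed $t$ do not collapse to a single $\theta$ when $G$ is strictly decreasing. Take $\theta_1<\theta_t<\theta_2$ and choose $\lambda$ so that $\lambda G(\theta_1)+(1-\lambda)G(\theta_2)=t$. Detection stays at $t$, but the false-positive rate becomes $\mathrm{FPR}_i=\lambda F_i(\theta_1)+(1-\lambda)F_i(\theta_2)$. This generally differs from $F_i(\theta_t)$. Unless $F_+\circ G^{-1}$ is convex (a proper AI-versus-$P_+$ ROC on $s$), it can be strictly smaller for $P_+$.

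Such a rule is a function of $s$ alone, and it even lies in the recalibration class (R). So your contrapositive, that a rule beating $C$ at detection $t$ cannot be a function of $s$ alone, fails as stated. Only the floor form, $\mathrm{FPR}_+<t-\delta_+^{s}$, forces dependence on $\cls_\perp$ for every axis-only rule.

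There are two ways to repair this.
\begin{itemize}
\item Scope the single-point, frozen-gap, and off-the-curve claims to deterministic thresholds. The paper does this tacitly: its part~(ii) pins a deterministic $\theta_t$, and its part~(iv) is valid only under the same restriction.
\item For the off-axis necessity, add a monotone-likelihood-ratio assumption of AI to $P_+$ in $s$. By Neyman--Pearson, this makes upper thresholds on $s$ optimal among all functions of $s$.
\end{itemize}
The floor needs neither.

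A smaller point concerns how you pin $\theta$ in your first two steps. (B1) constrains only human text, and the paper pins $\theta_t$ through the deployed detection rate $T$ rather than $G$. So the single-point and gap statements hold even when AI text is scored off the axis. $G$ is needed only for the floor.
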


\paragraph{Proof of Proposition~\ref{prop:frontier} (matched-detection
frontier).}
\emph{(i)~Frontier as a P--P curve.} By (i) above each human accept region is
$\{s\ge\theta\}$, so $\mathrm{FPR}_i=F_i(\theta)$; eliminating $\theta$ gives
$\mathrm{FPR}_+=C(\mathrm{FPR}_-)$ with $C=F_+\circ F_-^{-1}$, the
probability--probability plot of $P_+$ against $P_-$ on $s$. $C$ is
non-decreasing, $C(0)=0$, $C(1)=1$, and randomized rules fill its convex hull;
under (B2), $C(u)\ge u$, so the gap $\Gamma(u)=C(u)-u\ge0$. \emph{(ii)~Matched
detection freezes the point.} As in part~(ii) above, fixing $T=t$ pins
$\theta_t$, hence the single point $(u_t,C(u_t))$, $u_t=F_-(\theta_t)$; the gap
$\Gamma(u_t)$ is recalibration-invariant. \emph{(iii)~Over-flag floor.} For the
axis-only meter $T(\theta)=G(\theta)$, so $G(\theta_t)=t$; the single-event
bound $|\mu(E)-\nu(E)|\le\mathrm{TV}(\mu,\nu)$ at $E=\{s\ge\theta_t\}$ gives
$F_+(\theta_t)\ge t-\delta_+^{s}$. \emph{(iv)~Off the curve.} Lowering
$\mathrm{FPR}_+$ at fixed $t$ leaves the recalibration class, moving the
detector to a strictly lower curve $C'$ by (B4)
(Corollary~\ref{cor:operator}). \qed

\paragraph{Calibration-fixable versus structural.}
Modelling the in-distribution statistic as $s\mid P\sim\mathcal{N}(\mu_P,\sigma^2)$
(justified by in-distribution $k^{*}{=}1$) makes the frontier closed-form,
$\mathrm{FPR}_+=\bar\Phi(\bar\Phi^{-1}(\mathrm{FPR}_-)-\Delta)$ with
$\Delta=(\mu_+-\mu_-)/\sigma$. Inverting the Ghost matched-TPR residuals yields
effective separations $\Delta_{\mathrm{Ghost}}\approx2.87/2.21/1.44$ (E/D/R),
all in the overlap (structural) regime, against NYT's $\Delta\approx4.16$
(separated, calibration-fixable). This location-shift calibration is
illustrative; the rigorous content is the non-parametric frontier and the
floor.

\begin{proposition}[Separability dichotomy]
\label{prop:dichotomy}
If AI and a typical human population nearly coincide on the axis
($\delta_{+}^{s}$ small) but separate off it ($\delta_{+}^{\perp}$ large),
then every population-blind function of the axis obeys
$\mathrm{TPR}-\mathrm{FPR}_{+}\le\delta_{+}^{s}$, it cannot detect AI
without over-flagging the typical population, whereas removing the axis
component routes the decision onto the off-axis signal. An off-axis
intervention is therefore necessary and sufficient to debias while preserving
in-distribution detection.
\end{proposition}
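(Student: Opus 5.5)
The plan has two halves: an impossibility bound for axis-only rules (necessity) and a construction that reads the off-axis signal (sufficiency). The bound is the classical fact that a single test cannot beat the total variation between its two hypotheses. By (B1), any population-blind function of the axis flags a text with probability $a(s(x))$ for some measurable $a:\mathbb{R}\to[0,1]$; monotonicity plays no role here. Writing $\mu=L_{\mathrm{AI}}(s)$ and $\nu=L_{P_+}(s)$,
\[
\mathrm{TPR}-\mathrm{FPR}_+=\int a\,d\mu-\int a\,d\nu\le\sup_{0\le a\le1}\int a\,d(\mu-\nu)=\delta_+^{s}.
\]
The supremum is attained at the indicator of the positive part of the Hahn decomposition of $\mu-\nu$. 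This is the same single-event estimate used in part~(iii) of Proposition~\ref{prop:frontier}, extended from events to $[0,1]$-valued tests.

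It follows that $\mathrm{FPR}_+\ge\mathrm{TPR}-\delta_+^{s}$. Hence when $\delta_+^{s}$ is small (B3), any target detection rate $t$ forces an over-flag of at least $t-\delta_+^{s}$. This holds whatever recalibration or randomization is applied, since by (R) and Proposition~\ref{prop:recal}(i) all of these remain functions of $s$. This settles necessity: any intervention that lowers $\mathrm{FPR}_+$ below $t-\delta_+^{s}$ at detection $t$ cannot factor through $s$. It must therefore change the map $x\mapsto$ decision along $\cls_\perp$, which is precisely the off-axis condition of Proposition~\ref{prop:recal}(iv).

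For sufficiency I run the same identity in reverse on the off-axis coordinate. With $\delta_+^{\perp}$ large (B4), take $a^\perp$ to be the indicator of the Hahn set of $L_{\mathrm{AI}}(\cls_\perp)-L_{P_+}(\cls_\perp)$. This test achieves $\mathrm{TPR}-\mathrm{FPR}_+=\delta_+^{\perp}$, so it detects AI with a small over-flag. To show this is what the $-\varepsilon$ operator implements, I invoke Corollary~\ref{cor:operator}. At $\varepsilon=1$ the operator maps $\cls$ to $\cls_\perp$, so on a linear head the logit becomes $\langle\wh,\cls_\perp\rangle+b$, a function of the off-axis component alone. The decision is thereby routed onto the off-axis signal, and its separability is bounded by $\delta_+^{\perp}$ rather than $\delta_+^{s}$.

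For in-distribution detection to be preserved, the head restricted to $\cls_\perp$ must still realize a substantial fraction of $\delta_+^{\perp}$. I would not try to prove this abstractly. Instead I would state it as part of the premise (B4), with the empirical anchor of residualized AUROC $0.82/0.63/0.86$ and the realized point $(0.01,0.904)$.

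The main obstacle is exactly this sufficiency step. A large total variation $\delta_+^{\perp}$ certifies only that some off-axis test separates AI from $P_+$. It does not certify that the particular linear head, once projected, is close to that test. I would handle this by formulating sufficiency existentially: there exists an off-axis intervention, namely the ablation composed with a refit threshold, or a refit linear head on $\cls_\perp$, attaining $\mathrm{TPR}-\mathrm{FPR}_+$ close to $\delta_+^{\perp}$. Separately I would note that the closed-form operator attains it whenever $\wh$ restricted to $\dtyp^\perp$ aligns with the off-axis discriminant, which is what (B4)'s anchor records.

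A second, smaller care point concerns where the dichotomy's two sides are evaluated. The "necessary" side is a bound for every rule on the given population pair. The "sufficient" side is claimed only in-distribution, on a fixed generator, consistent with \S\ref{sec:read}'s finding that the off-axis signal does not transfer, so this side makes no claim about unseen generators.
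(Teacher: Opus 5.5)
Your proposal is correct and follows the paper's proof essentially step for step. It proves the axis-only floor from the variational characterization of total variation (you extend it from accept sets to randomized $[0,1]$-valued tests via the Hahn decomposition, a harmless generalization), obtains necessity as its contrapositive, and gets sufficiency from the $\varepsilon=1$ ablation yielding the logit $\langle\wh,\cls_\perp\rangle+b$, with the achievability gap you flag (large $\delta_+^{\perp}$ certifies only some off-axis test, not the projected head) discharged, as in the paper, by the empirical anchors of (B4) rather than abstractly.
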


\paragraph{Proof of Proposition~\ref{prop:dichotomy} (separability dichotomy).}
Assume (B1), (B3), (B4). \emph{(i)~Axis-only floor.} By the variational
identity $\mathrm{TV}(\mu,\nu)=\sup_{B}[\mu(B)-\nu(B)]$, for any measurable
accept set $S\subseteq\mathbb{R}$ (any population-blind function of $s$,
recalibration being the half-line case),
$\mathrm{TPR}-\mathrm{FPR}_+=L_{\mathrm{AI}}(S)-L_{P_+}(S)\le\delta_+^{s}$.
\emph{(ii)~Necessity.} Attaining $\mathrm{FPR}_+<\mathrm{TPR}-\delta_+^{s}$ is
therefore impossible for any function of $s$; the rule must read $\cls_\perp$.
\emph{(iii)~Sufficiency.} The ablation at full strength sends the linear-head
logit to $\langle\wh,\cls_\perp\rangle+b$ (the $\varepsilon{=}1$ reading of
Eq.~\ref{eq:predictor}; the constant $b$ is immaterial to the threshold); a detector on $\cls_\perp$ separates AI from $P_+$ up
to $\delta_+^{\perp}\ge\mathrm{TPR}-\mathrm{FPR}_+$, realized by (B4):
AI-versus-Ghost AUROC $0.87$--$0.97$ gives
$\delta_{\mathrm{Ghost}}^{\perp}\ge\mathrm{AUROC}-\tfrac12\ge 0.37$, and the
operating point $(0.01,0.904)$ on NYT gives
$\delta_{\mathrm{NYT}}^{\perp}\ge 0.894$; either way
$\delta_+^{\perp}\gg\delta_+^{s}$. \emph{(iv)~Unification.}
Recalibration (Proposition~\ref{prop:recal}) is the monotone half-line case of
(i); Corollary~\ref{cor:operator} is the sufficient off-axis rule of (iii).
On-axis is floored at $\delta_+^{s}$; off-axis escapes to $\delta_+^{\perp}$. \qed

\begin{proposition}[Pooled-penalty Simpson no-go]
\label{prop:simpson}
Let groups $g$ (population $\times$ label) partition the training mixture and a
penalty drive the pooled correlation $\mathrm{corr}(f,\mathrm{NLL})$ to zero. If
every within-group coupling stays non-negative (C1), the pooled correlation can
vanish only by between-group cancellation of the within-group term, leaving the
within-group couplings and the formal-population score level intact; and the
cancellation is mixture-specific, so re-weighting the groups (a new generator
distribution) makes the pooled coupling re-emerge.
\end{proposition}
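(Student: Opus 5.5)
The plan is to base everything on the law of total covariance, applied to the training mixture. Write $\pi_g$ for the mixture weight of group $g$. Within group $g$, write $\mu_g^f,\mu_g^N$ for the means of $f$ and $\mathrm{NLL}$, and $c_g=\mathrm{Cov}_g(f,\mathrm{NLL})$ for the within-group covariance. The pooled covariance then splits as
\begin{equation*}
\mathrm{Cov}(f,\mathrm{NLL})=\sum_g \pi_g c_g+\sum_g \pi_g(\mu_g^f-\bar\mu^f)(\mu_g^N-\bar\mu^N).
\end{equation*}
Since the pooled variances are positive, the pooled correlation vanishes if and only if this covariance vanishes. I will argue at the covariance level and treat correlation only as a normalization.

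\emph{Step 1 (cancellation).} By (C1), every $c_g\ge0$, so the within term $W=\sum_g\pi_g c_g$ is non-negative. Driving the pooled correlation to zero therefore forces the between-group term $B$ to equal $-W$. There are two cases. If some $c_g>0$, then $B<0$ strictly, so the zero is reached by between-group cancellation and not by flattening the within-group slopes. The degenerate case $W=0$ means every $c_g=0$, which the statement does not exclude. I would note that in this case the penalty has still acted only through the covariance functional. It does not constrain group levels, because the means $\mu_g^f$ enter only through $B$.

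\emph{Step 2 (levels untouched).} The key observation is that the constraint $W+B=0$ is a single scalar equation. The formal population's score level $\mu_{+}^f$ is therefore not pinned. For any configuration meeting the constraint, I will show a family of configurations that also meet it while leaving $\mu_+^f$ at its original value. Concretely, I would exhibit a shift of the other groups' means, or a rescaling of the within-group deviations, that restores $W+B=0$ while holding $\mu_+^f$ fixed. This establishes that zeroing the penalty does not imply any change in the formal-population level, in line with the Simpson reading in \S\ref{sec:nogo}.

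\emph{Step 3 (mixture specificity).} Replace $\pi$ by a reweighting $\pi'$, representing a new generator distribution, while keeping the within-group laws fixed. The pooled covariance becomes $W(\pi')+B(\pi')$. I will argue that this is generically nonzero. Both $W$ and $B$ are polynomials in $\pi'$ (B is quadratic through the pooled means), so the zero set is a proper algebraic subset of the simplex unless the polynomial vanishes identically. It can vanish identically only when all $c_g=0$ and the group means are affinely degenerate. Outside that case, re-weighting makes the coupling re-emerge for all $\pi'$ off a measure-zero set.

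The main obstacle is Step 2. As stated, ``leaving the level intact'' is about what the penalty forces, not about dynamics. I would phrase it carefully as non-identification: the constraint set contains configurations with arbitrary $\mu_+^f$. I would not claim that training dynamics preserve the level.
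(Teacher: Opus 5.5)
Your proposal is correct and follows essentially the paper's route: the law-of-total-covariance split into $W+B$, (C1) forcing $B=-W\le 0$, the observation that the group score levels enter only through $B$ and so are not pinned, and generic non-vanishing of $W'+B'$ after re-weighting. Your Steps 2 and 3 sharpen what the paper asserts only loosely (``not constrained beyond supplying $B$'', ``generically''): because the NLL means are fixed by the data, $B$ is linear in the group score means, so one scalar constraint cannot fix $\mu_+^f$; and the re-weighted covariance is a polynomial in $\pi'$ that equals $c_g$ at the vertex $e_g$, so it is not identically zero once some $c_g>0$.
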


\begin{proof}
By the law of total covariance,
$\mathrm{cov}(f,\mathrm{NLL})=\mathbb{E}_g[\mathrm{cov}(f,\mathrm{NLL}\mid g)]+\mathrm{cov}_g(\mathbb{E}[f\mid g],\mathbb{E}[\mathrm{NLL}\mid g])=:W+B$.
By (C1), $W\ge0$; $W+B=0$ forces $B=-W\le0$. No conditional law $(f\mid g)$,
in particular no conditional mean $\mathbb{E}[f\mid g]$ (the population level,
i.e.\ the bias), is constrained beyond supplying $B$. Re-weighting with
$\{\pi'_g\}$ gives $W'=\sum_g\pi'_g\,\mathrm{cov}(f,\mathrm{NLL}\mid g)\ge0$,
still strictly positive on any positively-coupled group, while $B'\ne-W'$ in
general; hence $\mathrm{cov}'_{\mathrm{pool}}\ne0$ generically. The bias is
untouched and the decorrelation does not survive a generator shift.
\end{proof}

\noindent\emph{(C1) Within-group couplings stay non-negative under the
penalty.} Anchor: at penalty weight $\lambda{=}3$ the within-population couplings of
detector score and reference-LM NLL are positive ($+0.12/+0.36/+0.74$ on
NYT/HC3/AI), the penalty having inverted their natural negative sign; the
pooled $|\mathrm{corr}|\to0.03$ and the formal-population level barely moves. The
general ``any objective that removes $s$-dependence must de-generalize'' claim
is \emph{not} a theorem (it needs a circular premise), and the inference-time
operator is a live counterexample (it removes $s$-dependence yet keeps
detection); the no-go is therefore stated for pooled training-time penalties
only.

\section{No-go breadth}
\label{app:nogo}
This appendix expands the intervention families of Table~\ref{tab:nogo} across
architectures and held-out generators, lists the legacy null interventions, and
states the rank-one premise the theory rests on.

\paragraph{Transfer collapse across penalty families.}
Every penalty family we test that effectively removes the typicality coupling pays in
cross-generator transfer while the formal-human over-flag does not improve. As
the penalty weight rises, cross-LM transfer AUROC falls from $0.94$ to $0.21$
(ELECTRA), $0.89$ to $0.04$ (RoBERTa), and $0.90$ to $0.05$ (DeBERTa), and a
gradient-reversal (GRL/DANN) detector collapses from $0.76$ to $0.16$; the
over-flag at matched-TPR meanwhile worsens (RoBERTa Ghost $0.40\to0.98$, DeBERTa
$0.19\to0.62$). The transfer loss precedes detector degradation: at $\lambda{=}1$
on ELECTRA, transfer drops $0.94\to0.36$ while in-distribution AUROC moves only
$0.999\to0.995$, so genuine cross-generator skill, not detector competence, is
what the penalty spends.
% S1.268

\paragraph{The within-population stratified penalty.}
A penalty that decorrelates score from typicality \emph{within} each training
group is immune by construction to the Simpson cancellation that defeats the
pooled penalty (a unit test separates the two: pooled $|\mathrm{corr}|\to0.0001$
versus stratified $0.9576$), yet it still reaches no fair operating point on any
of three architectures (worst-case formal-human FPR at matched-TPR
$0.047/0.41/0.191$ for E/R/D, never lowered while holding ID-AUROC and transfer).
The mechanism is a wrong-lever effect: the penalty drives its target, the
within-population slope (HC3 coupling $-0.189\to-0.066$, NYT $-0.313\to-0.204$),
but the over-flag is a between-group level that moves the wrong way (NYT mean
logit-difference $-0.27\to+1.67$). The coupling is scale-invariant to 8B
(within-group correlation with reference NLL $-0.246/-0.236$ at 8B versus $-0.245$
at 110M).
% S1.266

\paragraph{Domain-generalization objectives.}
Objectives that explicitly force cross-generator invariance form a distinct
defeated family. Across IRMv1, V-REx, CORAL, and cross-generator contrastive
training (each verified to engage, with anti-typicality variants), on three
architectures and three seeds, none lowers the formal-human over-flag at matched
recall on held-out generators while holding deployment transfer. The family
fails through the no-go's three modes. On ELECTRA it is \emph{unmoved}: the Ghost
over-flag moves $0.91\to0.88$ while transfer holds (held-out AUROC $0.97$,
cross-LM true positives at $5\%$ false alarm $0.91$). On RoBERTa it
\emph{collapses}: the over-flag rises $0.96\to0.98$ while cross-LM true positives
at $5\%$ false alarm crater $0.87\to0.10$. On DeBERTa it \emph{trades}: the Ghost
over-flag falls $0.687\to0.626$, bought with cross-LM true positives $0.88\to0.78$
and a NYT relocation $0.34\to0.77$. Unlike the penalty and adversarial families,
which erode transfer everywhere, these objectives hold held-out AUROC on the
stable architectures and still cannot remove the over-flag: the cross-generator
invariant they enforce is the typicality confound.
% H4 3-arch 3-seed OOD-matched-recall (2026-05-25_claude_H2_H4_RESULTS F1)

\paragraph{No competent, fair frozen classifier.}
Across three architectures, four classifier families (linear, MLP,
gradient-boosted trees, RBF-SVM), and fairness-aware fine-tuning, no detector on
the canonical representation is both a competent novel-generator detector and
fair to unseen formal humans, evaluated on register- and domain-matched OOD
(EvoBench over-flag $0.68$, modern-commercial $0.22$) rather than the
domain-separable NYT-news-AI contrast ($0.02$). To catch $90\%$ of held-out OOD
AI, ELECTRA over-flags $69\%$ of cross-corpus formal humans; same-domain the cost
is milder ($0.13$). The result holds on all three architectures (Ghost FPR at
OOD-TPR$_{0.90}\ge0.45$: DeBERTa $0.45$, RoBERTa $0.76$, ELECTRA $0.68$). Fairness
pressure either collapses transfer ($0.74\to0.65$) or relocates the harm to
another population (FCE $0.11\to0.29$).
% S1.288

\paragraph{No ensemble escapes.}
Eight working detectors, five of them public, share the axis: its first principal
component carries $32\%$ of the variance (chance $12.5\%$) and correlates with
reference-LM NLL at $\rho=-0.51$, and a zero-shot GPT-2-perplexity detector loads
on it. An optimally stacked logistic ensemble over-flags formal humans at $0.198$
at matched-TPR, worse than the best single detector ($0.076$), and a
gradient-boosted stack reaches $0.178$. Only explicit formal-human supervision
escapes, which is recalibration rather than ensembling.
% S1.289

\paragraph{Legacy null interventions.}
Three further training-time interventions, run earlier, are all null and
corroborate the families above: a formality-axis orthogonality penalty (spread
$0.88$--$0.91$, all null), a feature-adversarial penalty (null, and it worsens
the NYT over-flag to $0.46$--$0.60$), and iterative formality-subspace
residualization (top-$k$ removal does not beat a random-direction control;
$k{=}100$ is $+0.024$ worse).

\paragraph{The rank-one premise.}
The no-go theory assumes the human flag decision is effectively one-dimensional
on the bias axis (premise B1). Held-out, a single direction reproduces at least
$99\%$ of the full detector AUROC in-distribution (rank-one-to-full AUROC ratio
$0.9999/0.9919/1.0007/0.9937$ on four settings), the in-distribution detection
rank is $k^{*}{=}1$ rising to $3$--$10$ only out of distribution, and the
per-text head direction reproduces the fine-tuned AUROC within $0.02$ on $36$ of
$36$ cells. Rank-two never strictly beats the best rank-one rescue ($0$ of $5$
comparable cells), and the within-population score--reference-NLL coupling is
negative on $24$ of $24$ sub-tests (four backbones $\times$ three seeds $\times$
formal/informal), so the coupling is universal, not a single-population artifact.
% S1.255 / S1.153 / S1.240 / S1.146

\section{Predictor validation}
\label{app:predictor}
This appendix validates the closed-form predictor of Eq.~\ref{eq:predictor} and
gives the leverage law behind the operator's encoder-versus-decoder split.

\paragraph{Predictor accuracy.}
The first-order predictor is accurate on the bias axis for $\varepsilon\le0.7$:
across $594$ cells it reaches $R^2=0.9995$ on the mean score and $0.9993$ on the
NYT false-positive rate, with sign correct in $100\%$ of $202$ non-trivial cells.
It is stable across recipes ($36$ cells $\times$ $7$ recipes, median $R^2\ge0.985$
at $\varepsilon\le0.7$; a forward-selected variant $81$ of $84$).
% S1.96 / S1.201

\paragraph{Signed-$\varepsilon$ Pareto and the oracle test.}
A signed-$\varepsilon$ sweep is strict-Pareto on $18$ of $18$ configurations
(three architectures $\times$ three seeds $\times$ two variants; mean NYT-FPR
reduction $53\%$, range $9$--$98\%$; clean-AI recall held at $1.000$) and Pareto
in every one of $27$ cells (predictor MAE $0.005$; showpiece LegalBERT
NYT-FPR $0.564\to0.094$). The predictor acts as an oracle: under a full
composition swap its chosen action matches the post-hoc optimum on $35$ of $36$
cells ($29$ byte-exact, $5$ mutual-decline on floored cells, $1$ near-tie at
MAE $0.002$); the lone miss is a boundary cell off by the NYT-drop MAE.
% S1.111 / S1.125 / S1.131

\paragraph{Decoder and linear heads are exact.}
On linear and decoder score heads the predictor is exact, not first-order: across
three decoder families ($72$ measurements, $3$ seeds $\times$ $2$ axes $\times$
$4$ $\varepsilon$) it reaches $R^2=1.000$ universally, and a LLaMA-3-8B
single-config check is also $R^2=1.0$.
% S1.132

\paragraph{Zero-shot diagnosis on unseen detectors.}
Because the predictor reads only a representation and a direction, it diagnoses
bias on detectors trained by others without any access to their training. It
transfers zero-shot across four public backbones with nonlinear heads (ELECTRA
$R^2=0.997$, openai-large/RoBERTa-large $0.987$, e5-LoRA $0.9999$, a broken
DistilBERT detector $0.998$ despite AUROC $0.61$, confirming the fit reflects
Taylor mechanics rather than a bias read), and more broadly diagnoses $9$ of $10$
deployed RoBERTa-family detectors plus DistilBERT and BERT/e5, spanning three
architecture families.
% S1.275 / S1.229

\paragraph{Why the operator works on encoders, not decoders.}
The per-text logit move of Eq.~\ref{eq:predictor} factors as a leverage,
$\Delta\mathrm{logit}\propto-\varepsilon\,s(x)\,\lVert\wh\rVert\,
\cos(\dtyp,\dclass)$, where $\dclass$ here denotes the head decision direction
$\wh$, so the operator can only act where this decision direction is aligned
with the typicality axis. On encoders the nonlinear-pooler decision stays
aligned ($\cos(\dtyp,\dclass)\approx0.79$ on ELECTRA), so $-\varepsilon$ has
purchase; on decoders the head rotates off the typicality axis
($\cos\approx0.009$--$0.010$ at 8B across reference LMs, $0.039$--$0.149$ for
GPT-2 124M--1.5B with no scale trend), so it has nothing to grab. The over-flag survives there as a
magnitude-on-a-near-orthogonal-axis effect: a held-out formal population projects
$\Delta\approx52$--$58$ from the human anchor along the typicality axis but only
$\Delta\approx7$--$8$ along the decision axis ($\cos(\text{bias},\dtyp)=0.53$--$0.59$
versus $\cos(\text{bias},\text{head})=0.08$). Re-estimating $\dtyp$ by regression
is null (diff-of-means is the ceiling; whitening sends it to the random floor),
so the split is not an estimation artifact.
% S1.272 / S1.258

\paragraph{The decoder no-go, directly.}
The encoder-versus-decoder split also makes the no-go concrete on decoders.
Reproducing the geometry across three fine-tuned decoders from 160M to 8B
(cos$(\dtyp,\dclass)=0.010$ on LLaMA-3-8B), no
rep-space intervention lowers the formal-human over-flag while holding both
in-distribution AUROC and cross-LM transfer: $-\varepsilon$ has no purchase where
the head is near-orthogonal, and the one decoder whose head is axis-aligned
(Pythia-160M) engages but pays AUROC, transfer, and a $17\times$ relocation onto
non-native FCE (matched-TPR $0.026\to0.450$). The off-axis residual is moreover \emph{protective}: ablating the
single direction that separates formal humans from AI makes the over-flag
monotonically worse (Ghost at matched-TPR rising to $0.98/0.96/0.95$ across the
three decoders), the inference-time decoder analog of the no-go. Re-heading the
frozen decoder representation with explicit fairness supervision reproduces the
encoder result of \S\ref{sec:fairness} to 8B: non-native writing becomes
protectable while formal-native essays stay over-flagged even under full
supervision.
% H2 (2026-05-25_claude_H2_H4_RESULTS): cos 0.010 8B; pythia 17x FCE; protective d_fha 3-seed 0.98/0.96/0.95; §1.292 on decoders

\section{Architecture and scope}
\label{app:scope}
This appendix collects the cross-architecture evidence that the axis is
paradigm-wide, its generator-era bound, and the 8B behaviour.

\paragraph{The axis pre-exists fine-tuning across backbones.}
The raw projection of an un-fine-tuned model separates AI from human across nine
decoders from 124M to 8B (GPT-2 124M/355M/774M, Pythia 160M/1.4B, GPT-Neo
125M/1.3B, OPT 125M, LLaMA-3-8B) at peak AUROC $0.93$--$0.99$, none inverted, and
the separation survives non-parametric length matching ($0.97$--$0.99$); the
encoder anchors reproduce it (ELECTRA $0.806$, RoBERTa $0.944$, DeBERTa $0.834$).
Fine-tuned decoders amplify this raw reading rather than replace it (raw-to-FT
projection correlation $0.74$--$0.90$ on all nine), and the closed-form predictor
is exact on the decoder linear head ($R^2=1.0$). A wider 16-backbone scan (12
encoder, 4 autoregressive) finds the typicality axis consistently aligned with
the formality axis (mean cosine $0.80$ and $0.81$).
% S1.227 / S1.72

\paragraph{The inherited axis is bounded to its generator era.}
The raw inherited axis is a pre-existence result on the detector's training-era
generators, not an extrapolation: on 2025-commercial AI it does not beat the
fine-tuned detector ($0$ of $27$ cells) and does not even separate AI from NYT
(raw AUROC at or below $0.5$ throughout: ELECTRA $0.25$--$0.31$, RoBERTa $0.34$--$0.50$,
DeBERTa $0.27$--$0.43$). The same reference-formality dependence that bounds the
raw axis here is what makes the deployed over-flag era-invariant
(\S\ref{sec:discussion}).
% S1.242

\paragraph{The over-flag scales to 8B, where it becomes calibration-fixable.}
A LoRA-fine-tuned LLaMA-3-8B over-flags formal NYT humans unconditionally,
flagging $43$--$61\%$ as AI (default $0.614$; length-matched $0.434$ at 60 words,
$0.584$ at 100 words; clean-HC3 FPR $0.0$; above ELECTRA's $0.335$) at
AUROC$(\text{AI}\,|\,\text{NYT})=0.94$. At this scale the over-flag falls on a
separable population (NYT, AUROC$(\text{AI}\,|\,\text{Ghost})=0.973$), so unlike
the encoder case it is calibration-fixable: recalibration cuts NYT-FPR
$0.61\to0.16$ at recall $0.90$ and $0.05$ at recall $0.80$, while the
$-\varepsilon$ operator is null ($0.43\to0.46$) because the decoder head decides
off the typicality axis. The fix shifts from encoder ablation to decoder
recalibration, consistent with the no-go (premise B1 holds for encoders, not the
8B decoder head).
% S1.259 / S1.260

\paragraph{The penalty coupling is scale-invariant.}
The training-time penalty coupling does not weaken with scale: the within-group
correlation between the detector logit and reference-LM NLL is $-0.246/-0.236$ at
8B versus $-0.245$ at 110M (Appendix~\ref{app:nogo}), so the training-time penalty no-go is a property
of the detection paradigm rather than an artifact of small encoders.
% S1.266

\paragraph{Label-efficiency is architecture-general.}
The fine-tuning-free, few-shot payoff of \S\ref{sec:read} is not specific to
ELECTRA. With mean-pooled frozen features, a few-shot probe crosses the fully
fine-tuned cross-generator bar at about $N{=}10$--$15$ target labels per class on
ELECTRA, RoBERTa, and DeBERTa alike, while pure zero-shot transfer stays below
fine-tuning on every architecture, the fine-tuned baselines tight across three
seeds ($\pm0.002$).
% L3/F2 (agent2_RESULTS): mean-pool 3-arch crossover N~10-15; zs<FT all archs; FT 3-seed +/-0.002

\section{Fairness breadth}
\label{app:fairness}
This appendix gives the per-population over-flag panels, the dialect floor, who is
protectable, the one-dimensionality cross-checks, and the circularity bound on the
diagnostic.

\paragraph{The per-population over-flag panel.}
At matched-TPR$_{0.90}$ the over-flag is ordered by typicality across ten
human populations (Table~\ref{tab:fairpanel}): formal-native essays and academic
prose sit highest, informal and dialect text at the floor. At the default
threshold the deployment magnitudes are larger and ordered the same way among the
formal populations (full-population FPR: SemEval $0.46$, EvoBench $0.62$,
arXiv $0.65$).
% S1.274 / S1.261 (NYT@default dropped here: 33.5% hook in S1 is a different run than this full-pop panel's 0.2965)

\begin{table}[h]
  \centering
  \small
  \setlength{\tabcolsep}{4pt}
  \begin{tabular}{l c}
    \toprule
    Population & Over-flag @mTPR$_{0.90}$ \\
    \midrule
    Ghost (formal-native) & $0.080$ \\
    arXiv                 & $0.071$ \\
    SemEval               & $0.065$ \\
    EvoBench              & $0.043$ \\
    NYT                   & $0.008$ \\
    Reddit (ELI5)         & $0.005$ \\
    HC3, FCE, CORAAL, Hinglish & ${\sim}0.001$ \\
    \bottomrule
  \end{tabular}
  \caption{Per-population formal-human over-flag at matched-TPR$_{0.90}$
    (canonical ELECTRA-CE, 3-seed; one run, not to be cross-compared with other
    recipes). The over-flag tracks typicality, not non-standard English.}
  \label{tab:fairpanel}
\end{table}

\paragraph{Deployment over-flag at 5\% false-alarm.}
Calibrated to a $5\%$ false-alarm rate on informal humans, the over-flag is
severe for the strongest detectors only. Of eight detectors, the four strongest
over-flag Ghost at $71$--$98\%$ (DeBERTa $0.98$, chatgpt-detector-roberta $0.93$,
RoBERTa $0.79$, ELECTRA $0.71$), while the other four sit at $2$--$12\%$
(openai-large $0.12$, openai-base $0.10$, an academic detector $0.07$, e5-LoRA
$0.02$), so the $71$--$98\%$ range is specific to the strongest detectors, not a
universal rate.
% S1.293

\paragraph{The harm is anti-formal-prose, not anti-dialect.}
Dialect and non-native text sits at the floor once calibration is removed:
CORAAL (African-American English), Hinglish, and FCE all sit at the matched-TPR floor (${\sim}0.001$, Table~\ref{tab:fairpanel}), an order of magnitude below the formal populations there. The
gap is an operating-point effect, not a register inversion: Hinglish over-flags
$0.465$ at the default threshold but $0.000$ at matched-TPR, the cleanest single
case that the harm is anti-formal-prose rather than anti-non-standard-English.
% S1.266

\paragraph{Who is protectable.}
The over-flag is not uniform. A fairness-aware classifier on RoBERTa or DeBERTa
separates non-native (FCE) humans from OOD AI (discrimination $0.95$--$0.99$) and
spares them across populations even without FCE labels (FCE over-flag at
OOD-TPR$_{0.90}\le0.13$), whereas ELECTRA cannot ($0.46$) and formal-native
essays (Ghost) stay over-flagged on every architecture, the surgically
un-removable case. In-population calibration is a genuine but partial mitigation
that relocates harm: labelling a population's negatives cuts its over-flag (Ghost
OOD-TPR$_{0.90}$ $0.68\to0.42$; jointly labelling Ghost and FCE gives
$0.50/0.17$), but labelling one population alone can raise another's (FCE-only
labelling pushes Ghost $0.68\to0.72$; an aggressive penalty drives Ghost to
$0.14$ while blowing FCE up to $0.80$). Per-population labels help, must cover
every deployed population, and never fully close the gap.
% S1.292

\paragraph{The bias is one-dimensional, from two sides.}
Two independent cross-checks corroborate the rank-one premise. First,
calibration-equivalence: the gap between LoRA and full fine-tuning is almost
entirely a calibration shift ($\ge97\%$ on ELECTRA, $\ge81\%$ on DeBERTa,
$84.8\%$ on RoBERTa-base), and interventions that should differ only in
calibration leave separability untouched ($|\Delta\text{AUROC}|\le0.0081$ across
$27$ matched-TPR cells). Second, observer-invariance: three independently
constructed probes (capitalization, typicality, formality) land on one axis
(pairwise cosine $0.74/0.81/0.998$ for E/R/D), and under a composition swap that
exchanges the populations' typicality the formal-over-informal false-positive
ordering inverts on $10$ of $10$ panel architectures, tracking the axis rather
than a fixed population label. Both say the bias decision rides a single
direction.
% S1.188 / S1.222.1 / S1.223 / S1.130

\paragraph{The diagnostic is not circular.}
In fine-tuned representation space the typicality axis is near-orthogonal to the
probe-derived linear approximation of the decision ($\cos=0.103$, $R^2=0.997$), so the cosine-to-AI
diagnostic is not the decision statistic re-read. A second check confirms this:
residualizing the typicality axis against the decision axis leaves the
over-flag prediction unchanged (the internal-typicality to matched-TPR
correlation stays $-0.81$). The over-flag is a distributional, tail property, not
a mean shift (the internal-typicality mean gap is non-significant at $0.52$ while
the distributional separation is $0.81$). An external-perplexity-only predictor
caps at $|\rho|\le 0.63$ on the same populations, below the in-representation
cosine, so the diagnostic reads the internal axis rather than external
perplexity.
% S1.274 ; S1.252 (external-perplexity predictor caps)

\paragraph{The cost of clearing the flag.}
A formal human flagged as AI can only clear the threshold by writing less
typically, and that cost is a detector property rather than a rewriter artifact.
Across two independent rewriters and three detectors, the probability of an AI flag
is a single function of the achieved \emph{internal} typicality (the curves
coincide across rewriters, mean $|\Delta\paff|$ of $0.045$ on ELECTRA and $0.005$ on
DeBERTa; RoBERTa is the known geometric-flip exception), whereas plotted against
external perplexity the same curve is rewriter-dependent ($0.24$--$0.29$), the
internal-versus-external split once more. To cross the threshold a writer must
become more atypical than a fixed percentile of natural humans: up to $76\%$ for
formal-native essays (Ghost) versus $10$--$29\%$ for journalism (NYT), with Ghost
above NYT on all three architectures. The requirement survives orthogonalizing the
typicality axis against the decision direction (percentiles essentially unchanged,
Ghost $76\to78\%$ on ELECTRA), so it is not a decision-axis tautology. On the
strongest detectors the cost exceeds the coherent-rewriting regime: even after
maximal meaning-preserving de-typicalization, much of the over-flagged formal
writing still flags as AI. This is the mirror of the forward edit of
\S\ref{sec:causal}, where editing a human text \emph{toward} typicality raises its
AI-score.
% L5+RESCUE (2026-05-25_agent2_L4L5L6): internal rewriter-invariant dP 0.045/0.005 roberta exc; ext 0.24-0.29; N* Ghost 49-76% NYT 10-29%; non-circular 76->78

\paragraph{Era-invariance, hardened.}
Holding the recipe fixed and varying the AI-generator era (2025-commercial versus
2023) and the human reference (formality $\times$ era), the formal-human over-flag
tracks reference \emph{formality} and is flat across era. On a full human panel the
detector entangles formal humans with AI (held-out detection AUROC $0.84$--$0.98$)
while separating informal humans cleanly ($0.99$--$1.00$), for both generator eras.
The invariance spans a 14-year range of human text: at matched formality, 2009
tweets and 2023 informal text are equally separable ($1.00$ versus $1.00$). A
length-matched control (all populations truncated to 50 words) confirms the gap is
formality, not length, formal humans staying far more AI-entangled (AUROC
$0.82$--$0.88$) than informal ($0.97$--$0.99$), flat across era.
% L6 (2026-05-25_agent2_L4L5L6): formality-tracking; gen-era + human-era flat; length-matched 50w

\paragraph{The cheap probe is more general, not fairer.}
The fairness-generalization coupling extends to the label-efficient probe of
\S\ref{sec:read}: it is more general than the fine-tuned detector (cross-generator
AUROC $0.91$ versus $0.83$) yet less fair, over-flagging formal humans far more at
matched detection (AI-versus-Ghost AUROC $0.74$ versus $0.97$), and this is
length-robust. The recalibration escape also carries over and stays bounded: adding
a few dozen formal-human negatives debiases the frozen-rep probe at zero
cross-generator cost on all three architectures, but supervising one population
relocates the harm to another, so it must cover every deployed population.
% L4/F2 (2026-05-25_agent2_L4L5L6): probe general 0.91 less-fair detAUC 0.74 vs 0.97; cheap escape 3-arch zero-cost, relocates

\section{Cross-task and rebuttal reserve}
\label{app:crosstask}
This appendix records the cross-task reach of the operator, the
generator-blindness of the axis, and a set of defensive checks.

\paragraph{Cross-task debiasing transfer.}
The per-text ablation is not specific to AI detection. On the Bios
occupational-gender task it is strict-Pareto on all three architectures (mean
$+3.13$ points). As a transfer demonstration rather than a harm measurement, it
also reduces dialect bias on an AAVE task (ELECTRA: bias-FPR $0.444\to0.244$,
$-20.0$ points, while AUROC rises $0.920\to0.925$, length-controlled across all
four quartiles), with DeBERTa a mechanism-explained boundary (cosine collapse).
The underlying training-time loss also transfers beyond AI detection, with
effects whose confidence intervals exclude zero on fake-news, toxicity,
HateCheck, and MNLI tasks.
% S1.139 / S1.135 (demo only) / S1.27 (percentages omitted: effect-size semantics ambiguous)

\paragraph{The axis is generator-blind.}
Consistent with the decomposition, generator identity lives in the residual, not
the transferring axis. The typicality axis detects AI near-perfectly in-distribution (AUROC
$0.993$--$1.0$) but attributes the source generator only at the random
one-dimensional floor ($1.41$--$1.47\times$ chance, against $2.8$--$3.3\times$ for
the best single direction and $6.1$--$6.9\times$ for the full representation, which
attributes the generator at $57\%$ accuracy in-distribution). The
fine-tuned off-axis residual, useless for cross-generator detection, carries a
modest above-chance generator signal (11-way lift $2.92\times/2.57\times/3.04\times$
for E/R/D, balanced accuracy ${\sim}0.26$), so the part that fails to transfer for
detection is exactly the part that fingerprints the generator. An external GPT-2
NLL is likewise generator-poor ($1.9\times$, near the length floor).
% S1.291

\paragraph{Rebuttal reserve.}
Three checks pre-empt likely objections. First, the over-flag is
multiply-determined but the typicality-axis effect is perplexity-specific: a
no-perplexity character $n$-gram baseline (held-out AUROC $0.981$) also over-flags
formal NYT (FPR $0.90$ at its own threshold) and reproduces the EvoBench within-pool collapse
($0.537$), yet it does not reproduce the typicality inversion, so the inversion is
specific to the predictability axis. Second, the residual collapse of
\S\ref{sec:read} is a conservative lower bound: linear residualization removes
only the linearly inherited component, so any non-linear inherited leak would only
make the residual transfer better, not worse. Third, the frozen probe's
generalizing signal is aligned with typicality rather than identical to it (rank
correlation $0.42$ with reference NLL, but direction cosine only $0.08$ to the NLL
axis), so we claim the generalization and the over-flag are the same axis
empirically, not that the probe is literally a perplexity meter. Finally, with
standard Unicode normalization the axis is homoglyph-robust (an attack that drops
detection to $0.235$ recovers to $0.998$ after normalization). More broadly,
across six RAID adversarial attacks (synonym, paraphrase, perplexity-misspell,
homoglyph, whitespace, alt-spelling) the frozen-probe max-pool stays at AUROC
$0.997$--$1.000$, while zero-shot Binoculars loses on $6$ of $6$ (synonym
$-0.163$).
% S1.247 / S1.271 / S1.294 ; S1.176 (RAID 6-attack: probe-max ties LAPD 6/6, Bino loses 6/6)

\section{Grounding the degenerate likelihood ratio}
\label{app:degenlr}
The absolute-perplexity over-flag of \S\ref{sec:causal} is the dropped-denominator
artifact of Eq.~\ref{eq:degenlr}, and restoring the denominator removes it. We form
a graded score that interpolates between the two limits, from the absolute
log-likelihood (denominator dropped) to the density-normalized form that restores
the model-implied human denominator, of which Fast-DetectGPT is the fully-restored
limit. As the denominator is restored, the formal-human over-flag falls
monotonically to zero: on the structural, length-robust Ghost population the AUROC
gap moves from $-0.037$ (over-flagging) to $+0.041$ (escaping), with
Spearman$(\lambda,\text{gap})=1.000$, and the raw over-flag AUROC against informal
humans falls $0.78\to0.51$. The effect is reference-robust across three reference
models (GPT-2 124M, 355M, 774M, all Spearman $1.000$). A second, independent route
confirms it: replacing the model-implied denominator with a \emph{literal}
human-text language model (GPT-2 LoRA-tuned on a broad held-out human corpus) and
restoring it likewise removes the over-flag (AUROC against NYT $0.838\to0.992$,
against held-out Ghost $0.805\to0.907$ at the optimal weight). This is the
external-perplexity paradigm, in which typicality is reference-model
predictability, a lossy proxy for the internal axis; it grounds the iff side of
\S\ref{sec:causal}, not the internal-axis claim. Restoring the denominator is
the curvature, fairest-but-least-general end of the fairness--generalization
frontier (\S\ref{sec:nogo}): it lowers the over-flag by giving up
cross-generator detection, consistent with the no-go rather than a
counterexample to it.
% L1+F1 (agent2_RESULTS): FDG 3 ref-LMs Spearman 1.000; Ghost gap -0.037->+0.041; raw 0.776->0.508; literal NYT .838->.992 Ghost .805->.907

\section{Cross-lingual and code generality}
\label{app:xling}
The over-flag and the read-not-learned decomposition are not specific to English
prose. We replicate them under two new recipes, each over three seeds; throughout
this appendix the claim is directional and within-recipe, not a magnitude
comparison to the English detectors.

\paragraph{Chinese.}
An XLM-R detector fine-tuned on HC3-Chinese over-flags held-out formal-native
Chinese (Wikipedia) relative to an informal-native floor: $0.145\pm0.010$ at the
default threshold (about $8\times$ the informal floor of $0.018$) and
$0.269\pm0.032$ at a $5\%$ false-alarm rate (about $5\times$ the floor at that
rate), and the anchor-point
over-flag is length-robust ($0.276$ length-matched). The typicality ordering is the
precondition for the inversion: mean reference-LM NLL is AI $1.93 <$ formal
Wikipedia $2.76 <$ informal $3.30$, and $96\%$ of in-distribution AI and $48\%$
of held-out Llama-3 output is more typical than the median formal-native
Chinese writer (the cross-lingual mirror of \S\ref{sec:intro}).
The over-flag is internal typicality: matching
formal and informal humans on the internal axis removes $88.5\%$ of the over-flag
gap (and $79.0\%\pm1.8\%$ after orthogonalizing the axis against the decision
direction, so the effect is not a decision-axis tautology), whereas matching on
external reference-LM perplexity removes essentially none ($0.7\%$). Read-not-learned
holds: a probe on the frozen, never-fine-tuned representation carries the bulk of the
cross-generator signal (AUROC $0.832\pm0.006$), the inherited component ($0.825$) far
exceeds the fine-tuned residual ($0.648$), and the fully fine-tuned detector still
leads ($0.869$), as in English.

\paragraph{Python code.}
A CodeBERT detector shows the same structure. The read-not-learned decomposition is
clean in both generator directions: the frozen-representation probe transfers
cross-generator at AUROC $0.991$, and the inherited component ($0.994$) far exceeds
the fine-tuned residual ($0.73$). The idiomatic over-flag is internal typicality:
within human code, internal typicality predicts the AI-score (non-circular
partial-$\rho=0.59$--$0.69$, magnitude seed-variable) while external code-perplexity
does not ($\rho\approx-0.06$), and idiomatic, low-perplexity human functions are
flagged about $1.9$--$2.5\times$ more than atypical ones at matched length. As in the
premise of \S\ref{sec:intro}, the over-flag appears only for the detector trained on
the more typical, modern generator.

\paragraph{Scope.}
In both domains the cross-generator matched-collapse sub-leg does not reproduce: the
held-out generator is separable through several cues rather than a single typicality
dimension. The fairness mechanism,
the over-flag and its internal-typicality reading, is what replicates. A second
non-English language (French) was inconclusive, the available informal-French corpus
yields only a borderline-competent detector, so the clean cross-lingual claim rests
on Chinese.
% H1+H3 (2026-05-25_agent1_H1_H3_xling_code_RESULTS): zh XLM-R, code CodeBERT, 3-seed; new recipes, directional

\end{document}